\pgfplotsset{compat=1.16}
\theoremstyle{plain}
\newtheorem{theorem}{Theorem}[section] 
\newtheorem{defn}[theorem]{Definition} 
\newtheorem{exmp}[theorem]{Example}
\newtheorem{lemma}[theorem]{Lemma}
\newtheorem{remark}[theorem]{Remark}
\newtheorem{corollary}[theorem]{Corollary}
\newtheorem{prop}[theorem]{Proposition}
\DeclareMathOperator\supp{supp}
\newcommand{\tpi}{\Tilde{\pi}}
\newcommand{\tnu}{\Tilde{\nu}}
\newcommand{\R}{\mathbb{R}}
\newcommand{\B}{\mathcal{B}}
\newcommand{\I}{\mathcal{I}}
\newcommand{\J}{\mathcal{J}}
\renewcommand{\O}{\mathcal{O}}
\newcommand{\Pp}{\mathscr{P}}
\newcommand{\X}{\mathcal{X}}
\newcommand{\Y}{\mathcal{Y}}
\newcommand{\La}{(\bf{L})}
\newcommand{\e}{\varepsilon}
\newcommand{\spt}{\mathrm{Spt}\,}
\newcommand{\dpi}{\rotatebox[origin=c]{6}{\topinset{\rotatebox[origin=c]{180}{$\gamma$}}{$\gamma$}{-1.7pt}{1.55pt}\hspace{0.04cm}}}
\renewcommand{\iint}{\int\hspace{-0.285cm}\int}
\definecolor{darkred}{rgb}{0.6,0.1,0.1}
\definecolor{darkgreen}{rgb}{0.1,0.6,0.1}
\definecolor{darkblue}{rgb}{0.1,0.1,0.6}
\numberwithin{equation}{section}
\def\rm#1{\textbf{\textcolor{darkgreen}{#1}}}
\def\ap#1{\textbf{\textcolor{red}{#1}}}
\title{On Probabilistic Embeddings in Optimal Dimension Reduction}
\author{Ryan Murray \& Adam Pickarski\thanks{Department of Mathematics, North Carolina State University}} 
\date{\today}
\begin{document}
\maketitle
\begin{abstract}
     Dimension reduction algorithms are a crucial part of many data science pipelines, including data exploration, feature creation and selection, and denoising. Despite their wide utilization, many non-linear dimension reduction algorithms are poorly understood from a theoretical perspective. In this work we consider a generalized version of multidimensional scaling, which is posed as an optimization problem in which a mapping from a high-dimensional feature space to a lower-dimensional embedding space seeks to preserve either inner products or norms of the distribution in feature space, and which encompasses many commonly used dimension reduction algorithms. We analytically investigate the variational properties of this problem, leading to the following insights: 1) Solutions found using standard particle descent methods may lead to non-deterministic embeddings, 2) A relaxed or probabilistic formulation of the problem admits solutions with easily interpretable necessary conditions, 3) The globally optimal solutions to the relaxed problem actually must give a deterministic embedding. This progression of results mirrors the classical development of optimal transportation, and in a case relating to the Gromov-Wasserstein distance actually gives explicit insight into the structure of the optimal embeddings, which are parametrically determined and discontinuous. Finally, we illustrate that a standard computational implementation of this task does not learn deterministic embeddings, which means that it learns sub-optimal mappings, and that the embeddings learned in that context have highly misleading clustering structure, underscoring the delicate nature of solving this problem computationally.
\end{abstract}

\section{Introduction}
A central task in data science is to find efficient representations of high-dimensional data. One form of this task is known as \emph{dimension reduction}, in which one seeks to construct a mapping from a high-dimensional space to a low-dimensional space which approximately preserves features of an input distribution. Dimension reduction serves many purposes: it aids in data visualization and exploration, feature construction, and denoising. Dimension reduction is often stated in terms of some optimization problem, and naturally the properties and computational tractability are dependent upon the particular dimension reduction objective.

In this work, we consider dimension reduction problems corresponding to optimization problems of the form
\begin{displaymath}
  \min_T \sum_{ij} c(X_i,X_j,T(X_i),T(X_j)),
\end{displaymath}
where we are considering the $X_i \in \R^d$ to be data points in a high-dimensional feature space, and $T:\R^d \to \R^m$ represents a mapping, or embedding, into a lower dimensional space. A simple mnemonic here is that `$d$' is for ``data'' and `$m$' is for ``embedding''. In order to accommodate both finite data sets and large sample or population limits, we consider a generalized problem of the form
\begin{equation}\label{eqn:MDS-general}
    \J(T):= \iint c(x,x',T(x),T(x')) \mu(dx)\mu(dx'),
\end{equation}
where we will assume that $\mu \in \Pp(\R^d)$, the space of probability measures on $\R^d$. Throughout this work we make very few assumptions upon $\mu$: it could be supported on a discrete point cloud, a low dimensional manifold, or a continuous probability distribution. We call this problem the \emph{second-order dimension reduction problem}, where by second-order we mean that the objective function considers pairwise, or second-order, interactions between points. This problem encompasses many common dimension reduction problems, see Section \ref{sec:Normed_costs} for examples. Variants of this general problem have also been considered under the heading of multi-dimensional scaling and quadratic assignment problems. While not all dimension reduction algorithms can be written in this second-order form, such algorithms generally serve as building blocks for many commonly used methods, see Section \ref{sec:related-work} for more discussion.

Perhaps the simplest version of this form of problem is \emph{Classical Multidimensional Scaling} (cMDS), which, in the discrete setting and with $\sum_i X_i = \sum_i T(X_i) = 0$, seeks to minimize the objective function
\begin{equation}
   \min_{\{Y_\ell\}_{\ell=1}^n} \sum_{ij} (\langle X_i,X_j \rangle - \langle Y_i-\mathbb{E}[Y],Y_j -\mathbb{E}[Y]\rangle )^2.
\end{equation}
Alternatively, this can be written, again assuming that $\mathbb{E}(X) = \mathbb{E}[T(X)] = 0$, 
\begin{equation}
    \min_{T: \R^d \to \R^m} \iint (\langle x,x' \rangle - \langle T(x),T(x')\rangle )^2 \mu(dx)\mu(dx').
\end{equation}
In both versions of this problem the minimizer is known to be a linear mapping, implying that the minimizer is parametrically determined and smooth. Furthermore, this minimizer can be described as the projection onto the $m$-dimensions of greatest variance of $\mu$, and is equivalent to PCA. This approach to dimension reduction is prevalent in many contexts.

However, in some settings linear embeddings are too restrictive to capture important structures in data. For this reason a host of different cost functions have been proposed for dimension reduction, each emphasizing distinct priorities. In many contexts these algorithms are able to flexibly capture important features of high-dimensional distributions inaccessible to linear embeddings, but this flexibility comes at a price: non-linear dimension reduction problems generally can only be resolved via optimization routines, and their solutions do not admit transparent parametric representation formulas. As such, in many cases theoretical properties of the solutions to these problems are poorly understood. In particular, in the setting where $\mu$ is a continuum distribution, i.e. the large data or population limit, and when $c$ is non-convex, it is not clear whether the problem \eqref{eqn:MDS-general} even admits a minimizer. We will discuss negative results in the mathematical literature along these lines in Section \ref{sec:existence}, but in simplified terms for non-convex energies it is possible for approximate minimizers to converge towards a limit which is not a function. While the issue of existence is often straightforward in the finite data setting, the lack of a meaningful population limit raises significant issues for optimization and interpretability of minimizers: we highlight this issue with a simple numerical experiment in Example \ref{exmp:numerical}.



Similar issues were long-standing in the theory of optimal transportation, and our approach in this paper mirrors that literature. In that context, the Monge formulation of optimal transportation seeks to minimize
\begin{equation}
    \min_{T:\R^d \to \R^d, T_\sharp \mu = \nu} \int c(x,T(x)) \mu(dx).
\end{equation}
Here $\nu$ is an output distribution and $T_\sharp$ denotes the pushforward measure. Demonstrating that Monge's problem has a solution was a major open problem for many years, and while the dimension reduction problem notably lacks the output distribution constraint, the overall lack of convexity with respect to $T$ still engenders a similar type of issue.

The technical solution to this issue in optimal transportation is to instead consider a relaxed version of the problem, namely
\begin{equation}
\label{OTprob}
    \min_{\pi \in \Pi(\mu,\nu)} \int c(x,y) \pi(dx\: dy),
\end{equation}
where $\Pi(\mu,\nu)$ is the set of probability distributions on $\R^d \times \R^d$ with marginals $\mu,\nu$: such probability measures in $\Pi(\mu,\nu)$ are called transportation plans and are multi-valued generalizations of the transportation map $T$ sought for in the Monge problem. In short, this formulation relaxes the requirement that $x$ is mapped ``deterministically'' to a single $T(x)$, and instead permits a single $x$ to be mapped probabilistically to multiple outputs. Demonstrating that this problem has a solution using ``soft'' analytical methods is straightforward. Subsequently, one can establish structural properties of such relaxed solutions. Using tools such as cyclical monotonicity and convex analysis, one can demonstrate that under mild assumptions minimizers of \eqref{OTprob} are actually induced by a mapping, which means that the original Monge problem possesses a solution. We can similarly pose a relaxed version of the MDS problem by seeking to minimize
\begin{equation} \label{DR}
   \J(\pi):=\left\{\iint c(x,x',y,y') \pi(dx\:dy) \pi(dx'dy'),\,\pi \in \Pi(\mu) \right\}
\end{equation}
where we let $\Pi(\mu)$ denote the set of distributions on $\R^d \times \R^m$ which have marginal $\mu$ in the first $d$ coordinates and refer to this as the set of \emph{embedding plans}. Throughout this article, we often use the notation $\X=\R^d$ and $\Y=\R^m$ to avoid confusion about which space we are embedding to.

The optimal transportation problem is inherently one of linear programming, whereas the dimension reduction problem is more aptly seen as a non-convex quadratic program (see Example \ref{DR_is_non_convex}). We mention that there is a quadratic programming variant of optimal transportation. In particular the Gromov-Wasserstein metric\footnote{The Gromov-Wasserstein metric generally is defined between two metric measure spaces, but we restrict our attention here to distributions on two different Euclidean spaces due to the connection with dimension reduction.} between distributions $\mu,\nu$, supported respectively on $\R^d$ and $\R^m$, is defined by the minimization problem \cite{article}
\begin{equation}
\label{GW_gen}
    d_{GW_{p,q}}(\mu,\nu)^{p} = \min_{\pi \in \Pi(\mu,\nu)} \iint \Big| |x-x'|^q-|y-y'|^q\Big|^p \pi(dx\: dy) \pi(dx'dy').
\end{equation}
In the Gromov-Wasserstein problem one generally has two marginal constraints, whereas in the dimension reduction problem there is only a single marginal constraint. As such, we can cast the dimension reduction problem as a projection problem in the Gromov-Wasserstein space: namely if we let $c(x,x',y,y') = \Big||x-x'|^q-|y-y'|^q\Big|^p $ then we have that $\min_{\pi\in\Pi(\mu)} \J(\pi) = \min_{\nu} d_{GW_{p,q}}(\mu,\nu)^{p}$.

The question of whether minimizers of the  Gromov-Wasserstein problem are always induced by transportation maps has recently be studied in \cite{article2,vayer2020contribution}, and their conclusion is that a deterministic minimizer (i.e. a minimizer of \eqref{GW_gen} which is supported on the graph of a function) is guaranteed to exist. Their work however does not establish whether minimizers are necessarily deterministic, a question which is relevant to solutions found via optimization routines. 

A natural question in the context of dimension reduction is whether optimal plans are necessarily maps, or in other words whether solutions to the relaxed problem \eqref{DR} are always solutions of the original problem \eqref{eqn:MDS-general}. The following example demonstrates that for numerically constructed local minimizers, this is not always the case.

\begin{figure}[h]
    \centering
    \begin{subfigure}[b]{0.48\textwidth}
         \centering
         \includegraphics[width=\textwidth]{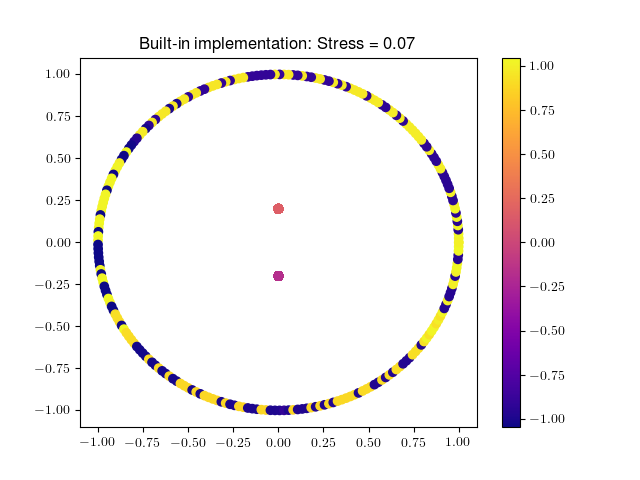}
         \caption{Scikit-Learn Embedding}
         \label{fig:sk-learn-embedding}
     \end{subfigure}
     \hfill
     \begin{subfigure}[b]{.48\textwidth}
         \centering
         \includegraphics[width=\textwidth]{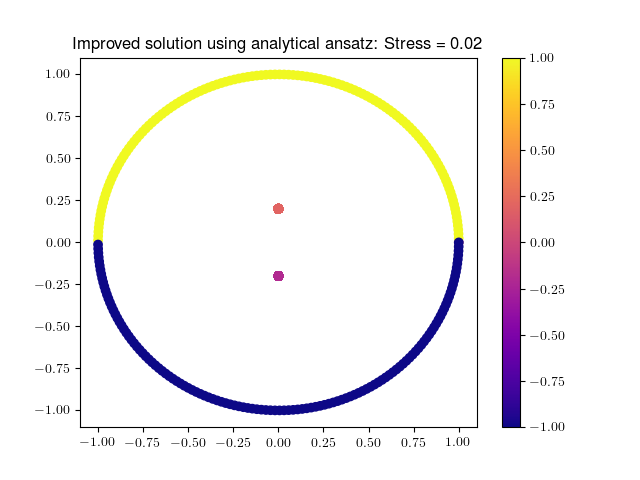}
         \caption{Globally optimal embedding?}
         \label{fig:optimal-embedding}
     \end{subfigure}
    \caption{An example where standard algorithms find locally optimal solutions which are not maps. Here the position of the points represents the original features in $\X = \R^2$, whereas the color represents the learned embedding in $\Y = \R$. The first graph shows the embedding learned by the implementation of metric MDS in Scikit-learn, and the second graph shows the embedding Scikit learn finds if given an analytically-motivated initial guess. The stress values, normalized by the the number of points squared, is also displayed, with a clear improvement in the second image. }
    \label{fig:comp-example}
\end{figure}

\begin{exmp}
\label{exmp:numerical}\footnote{The computation in this example was discovered in collaboration with Brian Swenson, and work about computational aspects of this problem is ongoing.}
\label{sklearn}
We consider the problem of embedding a particular point cloud in $\R^2$ into $\R$. The point cloud that we choose has 1,000 points placed at $(0,\pm .2)$, as well as 250 points placed randomly upon the unit circle. When we utilize the built-in algorithm for metric multidimensional scaling in Scikit-learn, the embedding which is found is very discontinuous: this is illustrated in Figure \ref{fig:sk-learn-embedding}. Indeed, changes around the boundary of the unit circle do not have a discernible pattern, and appears to be non-deterministic. The reason for this behavior is that due to the larger clusters near the origin, points on the unit circle are energetically favorable at either $\pm 1$, in the sense that both are local minimizers when other points are held fixed. These local minimizers are both nearly global minimzers as well, as the relative costs of being at either plus or minus one are comparable: this is due to the fact that the two larger clusters are relatively close together. The behavior of the solutions found indeed suggests that non-deterministic embedding plans can be local minimizers of the energy, at least if perturbations are only considered in the sense of small changes to particle positions. 

However, working by hand we would expect that the optimal embedding should be much more principled, and should map halves of the circle deterministically to different sides of the real line, according to the cluster they are closer to. Figure \ref{fig:optimal-embedding}, uses this ansatz to construct an initial guess for the same optimization routine in Scikit-learn. The learned embedding, while still having a jump discontinuity, is more interpretable and also obtains a significantly lower cost.
\end{exmp}

The previous example is, in the authors' opinion, rather arresting from the practical point of view. The embedding constructed by standard libraries has found four well-separated clusters, but two of the clusters were constructed by breaking up the unit sphere in a completely arbitrary fashion. Considering those two clusters as useful features or groups is clearly misleading at best.

This computational example also highlights potential mathematical challenges to proving the equivalence between the original and the relaxed dimension reduction problems, that is between problems \eqref{eqn:MDS-general} and \eqref{DR}. Indeed, the embedding learned by the standard implementation ought to be a local minimizer in some sense, suggesting that it may be possible to find local minimizers of \eqref{DR} which are not mappings.

This work aims to address these questions, in certain contexts, through the following contributions:

\begin{enumerate}
  \item (Proposition \ref{DR_lacks_weak_LSC}) We show that the dimension reduction energy \eqref{eqn:MDS-general} is not weakly lower semi-continuous in any $L^p$ space for many natural choices of $c$, meaning that existence of minimizers cannot be established using the direct method of the calculus of variations. In practice, this can lead to highly oscillatory (i.e. non-deterministic) solutions and poor local minima during gradient descent, as demonstrated in Example \ref{exmp:numerical}.
     \item (Theorems \ref{existence} $\&$ \ref{existence_NC}) Under appropriate conditions, we first show that for costs of the form $c(x,x',\langle y,y'\rangle)$ and $c(x,x',|y-y'|^2)$, the relaxed problem \eqref{DR} has a minimizer. This is mostly a consequence of standard arguments from the calculus of variations.
    \item (Theorem \ref{marginal_problem}) For the same class of costs, we demonstrate that any minimizer, $\pi$, of \eqref{DR} is essentially supported on the set
    \begin{equation}
        \{y : J_\pi(y|x) \text{ is minimized in } y\},\qquad J_\pi(y|x) := \int c(x,x',y,y')\pi(dx'dy').
    \end{equation}
    We call the problem of minimizing $J_\pi(y|x)$ the \emph{Marginal Problem}. This problem in many cases provides a significant constraint upon the form of $\pi$. The argument here relies upon a construction of localized perturbations inspired by needle perturbations from control theory.
  \item (Corollary \ref{corollary:IP_mins}) We show that certain costs, namely those for which $\langle y,y'\rangle \mapsto c(x,x',\langle y,y'\rangle)$ is convex, will only admit deterministic minimizers of \eqref{DR}: in the jargon of optimal transportation such solutions are maps. These solutions will furthermore have smoothness controlled by the differentiability of $c$.   
  \item (Theorem \ref{DP}) We show that for costs for which $|y-y'|^2\mapsto c(x,x,|y-y'|^2)$ has a unique minimum at $y=y'$ then minimizers of \eqref{DR} will necessarily be deterministic.
  \item (Examples \ref{qMDS_MP}, \ref{exmp:discontinuous}, $\&$ \ref{exmp:equivilance_classes}) We discuss in depth the example of a quartic cost in $|y-y'|$ stemming from Gromov-Wasserstein spaces, which is known to give non-linear embeddings. In that context we can additionally show that minimizers admit a parametric representation and have discontinuities along specific hyperplanes.
\end{enumerate}

These results have direct consequences for computational dimension reduction and their applications for practitioners, which we further discuss in Section \ref{sec:conclusion}.

The remainder of the work is organized as follows: in Section \ref{sec:related-work} we discuss literature from related fields, including various methods for dimension reduction and optimal transportation. In Sections \ref{sec:existence}, \ref{sec:marginal}, and \ref{sec:Normed_costs}, we prove the main results for generic costs, namely in Section \ref{sec:existence} we prove the existence of solutions to the relaxed problem \eqref{DR}, in Section \ref{sec:marginal} we demonstrate that the support of optimal plans is determined by the Marginal Problem and that similarity costs which are convex in the inner product necessitate deterministic minimizers, and in Section \ref{sec:Normed_costs} we describe how to obtain a similar result for normed squared costs. In Section \ref{sec:marginal} we investigate some finer properties of the Gromov-Wasserstein projection problem which also serves to motivate the theoretical considerations in Section \ref{sec:Normed_costs}. In Section \ref{sec:conclusion} we discuss ramifications of these results, as well as some further questions.

\subsection{Related Work}\label{sec:related-work}

Dimension reduction, and specifically Multidimensional Scaling (MDS), has a long history: we refer the reader to the books \cite{coxMDS,borg2005modern} for an in-depth classical statistical treatment of MDS. We mention here that MDS has extensions to a variety of settings, such as the setting where the original points belong to a metric space, or even where we only have access to a matrix of similarities or dissimilarities between our $x$'s. In certain applied fields, such as psychology \cite{kruskal1964}, MDS has been utilized extensively for group identification, and is cited in \cite{borg2005modern} as an important tool for data exploration. In the case of classical Multidimensional Scaling, which is equivalent to PCA, the explicit representation of solutions has facilitated many theoretical works, see for example \cite{10.1214/20-EJS1720} and the references therein. Several computational approaches have also been developed for speeding up the computation of MDS embeddings. Some references on the topic include \cite{JSSv031i03,yang2015majorization}.

On the other hand, in the last twenty years there has been extensive development of new dimension reduction techniques within the context of data science. A standard introductory reference for many of those types of algorithms is cite{james2013introduction} Chapter 14, and an in-depth comparison of various non-linear dimension reduction techniques can be found in \cite{van2009dimensionality}. These algorithms take a variety of approaches for preserving either global or local structure. Some notable examples include local linear embeddings, isomap, spectral embeddings, Sammon mapping, Multidimensional Scaling, and stochastic neighborhood embeddings \cite{doi:10.1126/science.290.5500.2323,Tenenbaum-Langford,6789755,1671271,kruskal1964,hinton2002stochastic}.

While the examples in this paper are fairly general, there are dimension reduction methods which go beyond our framework as they utilize locally adaptive kernels, for example tSNE, UMAP, or LLE \cite{van2008visualizing,mcinnes2018umap,doi:10.1126/science.290.5500.2323}. There has been recent interest in the mathematical community for identifying simplified models and techniques for understanding tSNE; see for example \cite{auffinger2023equilibriumdistributionstdistributedstochastic} which uses stochastic processes and random matrix theory techniques. There are also some mathematical works which seek to describe specific aspects of finding ``good'' solutions to SNE \cite{steinerberger-1}, in particular by studying early exaggeration techniques commonly used for training. We also remark that variants of the quartic example that we focus on in this work has previously been identified in the statistical learning literature as a particular scaling limit of tSNE \cite{hinton2002stochastic}.

This work has been significantly influenced by the development of the theory of optimal transportation, a good introduction to which can be found in \cite{villani2008optimal}. Recent works in the OT literature, such as multi-marginal transport \cite{pass2015multi} 
 and transport between spaces of unequal dimension \cite{nenna2020variational}, have also dealt with scenarios similar to ours, but in situations with linear dependence on $\pi$.

 There has also been a lot of interest recently in the Gromov-Wasserstein distance \cite{memoli2011gromov}, which provides a transportation-based metric between probability measures on two different metric spaces. Very recently multiple authors \cite{vayer2020contribution,article2} have studied the question of whether optimal plans in the Gromov Wasserstein problem are in fact realized by mappings. These works attempt to convert the Gromov-Wasserstein problem into an inhomogeneous linear (in $\pi$) problem, which then they tackle by using general optimal transportation theory: these works are able to show the existence of an optimal mapping. In particular, in \cite{article2} a Monge mapping is constructed as a solution to the GW problem, but the necessity of a deterministic solution is still an open question. Furthermore, earlier works such as \cite{vayer2020contribution} show that in the quartic setting, if a certain correlation matrix is non-degenerate then any optimal plan must be induced by a mapping. However, it is unclear how to directly prove that those correlations are in fact non-degenerate. Similarly, in \cite{arya2024gromovwassersteindistancespheres}, a Monge mapping was constructed in the special case between two spheres. Lastly, in \cite{titouan2019sliced} it was claimed that when $d=m=1$, that optimal solutions admit simple representations (as a monotonic map); however more recent work \cite{doi:10.1137/22M1497808} refuted this claim and provided a counterexample.
 
 It is important to note the connection between the Gromov-Wasserstein problem and quadratic assignment problems (QAP). In its original formulation \cite{6b66faf6-3f33-3aa9-8609-22b8cfa3f37a}, the quadratic assignment problem describes a variant of the optimal transport problem, wherein the function we minimize is of second degree in the unknown permutation matrix. A notable example of the QAP is the graph matching problem which matches the edges of two graphs in a meaningful way. This can rightly be viewed as a type of Gromov-Wasserstein problem.
 
 We also mention that there has also been a lot of recent work trying to find fast algorithms for GW problems, see for example \cite{peyre2016gromov,titouan2019sliced,LTGW}. The parametric form we derive for quartic MDS suggests that faster algorithms may also be available for the GW projection problem as well.

 Finally, there has been a vein of mathematical literature \cite{Pedregal1997,Elbau2011,Bellido2014,Foss2018} treating the minimization of energies of the form
 \begin{displaymath}
   \min_{u} \, \mathcal{I}(u),\qquad \mathcal{I}(u) := \iint \Phi(x_1,x_2,u(x_1),u(x_2)) \,dx_1\,dx_2.
 \end{displaymath}
 The main focus of these works has been to establish conditions which guarantee the existence of minimizers for energies of this type, by proving weak lower-semicontinuity in an appropriate topology. To our knowledge each of these results requires some form of convexity with respect to $\Phi$. Our work strongly contrasts that line of work, in that 1) we study forms of $\Phi$ with specific symmetries, 2) we demonstrate that our energies are \emph{not} weakly lower semicontinuous, and 3) we demonstrate that in spite of this lack of weak lower semicontinuity that there still exists minimizers of our original dimension reduction problem.

%
%

\section{Existence of Relaxed Solutions}\label{sec:existence}

In this section we consider the problem of existence of minimizers of \eqref{eqn:MDS-general} and \eqref{DR}. Along the way, we demonstrate that many of the standard techniques from the calculus of variations do not apply to the original problem of finding an embedding map as in \eqref{eqn:MDS-general}, namely the lack of weak lower semi-continuity. These theoretical observations directly compliment the phenomenon observed in Example \ref{sklearn}, and demonstrate the difficulty of proving properties of minimizers of the original problem \eqref{eqn:MDS-general}.

Convexity plays a crucial role in proving existence of minimizers for many variational problems. We begin by demonstrating, through a simple example, why convexity can fail in second-order dimension reduction problems.

\begin{exmp}\label{nonconvexqmds} We consider, as a running example throughout the paper, the quartic cost $c(x,x',y,y')=\big(|x-x'|^2-|y-y'|^2\big)^2$. Fix $\varepsilon>0$ and let $T\in C^1(\R^d;\R^m)$ be a Lipschitz function such that $\|DT\|_\infty \leq \sqrt{2-\varepsilon}$. We consider the effect of interpolating between $T(x)$ and $-T(x)$. Clearly the midpoint between these two maps is identically zero (we call this map the ``zero map'' through the paper), namly $\frac{1}{2}(T(x) - T(x)) \equiv 0$, and furthermore from the norm structure of the cost we immediately have that $\J(T)=\J(-T)$. Hence if $\J$ were midpoint convex, one would require $\J(T)\geq \J(0)$. However, we have\begin{align*}
\J(T)-\J(0)&=\iint |T(x)-T(x')|^4-2|x-x'|^2|T(x)-T(x')|^2\mu(dx)\mu(dx')\\
&\leq \iint (2-\varepsilon)|x-x'|^2|T(x)-T(x')|^2-2|x-x'|^2|T(x)-T(x')|^2\mu(dx)\mu(dx')\\
&=-\varepsilon\left(\iint |x-x'|^2|T(x)-T(x')|^2\mu(dx)\mu(dx')\right) \leq 0.
\end{align*}
If $\mu$ has a direction of non-zero variance, and $T$ is chosen to also vary in that direction, then this inequality is strict: one can find a linear mapping which achieves this goal. Hence $\J$ is not convex with respect to $T$.
\end{exmp}
It turns out that the previous observation, which primarily stems from the reflection symmetry of the quartic cost, extends to many second-order costs that have been previously considered for dimension reduction. In order to make the problem more concrete, we focus on two basic examples which encompass a broad family of practical situations. The first models the interactions of the embedded variables by an inner product, the second by a squared norm: these costs are known in the literature for multidimensional scaling as \emph{similarity} and \emph{dissimilarity} costs respectively. For concreteness we state these assumptions upon the structure more explicitly in Section \ref{sec:ex-relaxed}. However, with this distinction in place, we can now restate the non-convexity result above in more generality. Later we will provide suitable assumptions to identify the domain of definition for the dimension reduction problem.

\begin{prop}\label{DR_is_non_convex}
If the functionals\[
\J_{\mathbf{IP}}(T)=\iint c(x,x',\langle T(x),T(x')\rangle)\mu(dx)\mu(dx'),\qquad\J_{\mathbf{N}^2}(T)=\iint c(x,x',|T(x)-T(x')|^2)\mu(dx)\mu(dx')
\]are finite for functions in $L^p(\R^d;\R^m|\mu)$, and $T \equiv 0$ is not the global minimizer, then $\J_{\mathbf{IP}}\,\&\,\J_{\mathbf{N}^2} $ are neither convex nor concave on $L^p(\R^d;\R^m|\mu)$.
\end{prop}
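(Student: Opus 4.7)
The plan is to exploit the reflection symmetry shared by both functionals: since $\langle -T(x),-T(x')\rangle=\langle T(x),T(x')\rangle$ and $|{-T(x)+T(x')}|^2=|T(x)-T(x')|^2$, we have $\J_{\mathbf{IP}}(T)=\J_{\mathbf{IP}}(-T)$ and $\J_{\mathbf{N}^2}(T)=\J_{\mathbf{N}^2}(-T)$. By hypothesis there exists some $T^\ast\in L^p(\R^d;\R^m|\mu)$ with $\J(T^\ast)<\J(0)$; the whole proof revolves around this pair and its rescalings.

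For non-convexity I would test midpoint convexity at $T_1=T^\ast$ and $T_2=-T^\ast$, whose midpoint is the zero map. Midpoint convexity would force
\begin{equation*}
\J(0)\;\leq\;\tfrac{1}{2}\J(T^\ast)+\tfrac{1}{2}\J(-T^\ast)\;=\;\J(T^\ast),
\end{equation*}
contradicting $\J(T^\ast)<\J(0)$. Since convexity implies midpoint convexity, this already rules out convexity of both $\J_{\mathbf{IP}}$ and $\J_{\mathbf{N}^2}$, and is essentially the computation of Example \ref{nonconvexqmds} written in the generality of similarity and dissimilarity costs.

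For non-concavity the same midpoint test gives only $\J(0)\geq\J(T^\ast)$, which is consistent with the hypothesis, so I would instead scale along the line through $T^\ast$: setting $f(\lambda):=\J(\lambda T^\ast)$, concavity of $\J$ together with reflection symmetry make $f$ an even concave function $\R\to\R$ with $f(0)>f(1)$. The chord bound $f(\lambda)\leq f(0)+\lambda\,(f(1)-f(0))$ valid for $\lambda\geq 1$ would then force $f(\lambda)\to-\infty$, contradicting the lower-boundedness of $\J$ (implicit in the existence of a global minimizer, and automatic whenever $c\geq 0$, which covers every running example in the paper). The main obstacle is precisely this step: symmetry alone only shows that concavity would make $T\equiv 0$ a global \emph{maximum}, which in isolation is compatible with the hypothesis; it is lower-boundedness that closes the loop, through the classical fact that a concave function $\R\to\R$ bounded below must be constant.
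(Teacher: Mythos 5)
Your proof is correct and takes essentially the same approach as the paper. The non-convexity half is identical: test midpoint convexity at the reflection pair $T^\ast$ and $-T^\ast$. For non-concavity, the paper simply states that a non-constant, bounded-below functional cannot be concave; your ray-scaling argument with $f(\lambda)=\J(\lambda T^\ast)$ and the one-sided chord bound is exactly the short proof of that fact, and you correctly flag that it is lower-boundedness (implicit via non-negativity of $c$, as in $(\mathbf{A1})$) rather than reflection symmetry that closes the argument --- the evenness of $f$ is in fact unused in that step.
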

\begin{proof}
  The proof follows exactly as in Example \ref{nonconvexqmds}: If $\J(T) = \J(-T) < \J(0)$ for some $T$ then $\J$ cannot be midpoint convex. Furthermore, $\J$ cannot be concave if it is non-constant and positive.

\end{proof}
As mentioned above, this lack of functional convexity will become a significant theoretical obstacle: this type of obstacle is well-known in the literature for the theory of the Calculus of Variations. In order to clarify this obstruction to a broader audience, we use the next section to provide a brief overview of this theory. A reader who is comfortable with all of these concepts can safely skip to Section \ref{sec:ex-relaxed}; the main result being Proposition \ref{DR_lacks_weak_LSC} which demonstrates that the dimension reduction energy cannot be a weakly lower semi-continuuous functional on $L^p(\R^d;\R^m|\mu)$.

\subsection{A detour into theory of variational problems}
We now describe the reason why existence of minimizers to \eqref{eqn:MDS-general} is a challenging problem. To put it concisely, the standard ``direct method'' from the calculus of variations does not apply due to the non-convexity of $c$. This occurs both due to the generic the lack of (strong) compactness in $L^p(\R^d;\R^m|\mu)$ and the failure of weak lower semi-continuity of $\J$ in the same space. We provide a number of standard examples to clarify these phenomena to a broader audience. 

The direct method of the calculus of variations seeks to generalize the extreme value theorem in finite dimensions to infinite dimensional optimization problems. It proves the existence of minimizers of a functional $\I: U \to \R$, where $U$ is an infinite-dimensional space, by combining the following assumptions:
\begin{enumerate}
  \item \textbf{Coercivity:} Given some set $B \subset U$ we have that $\I(B^c) > \inf_U \I$.
  \item \textbf{Compactness:} Under some topology $\tau$ we have that $B$ is sequentially compact.
  \item \textbf{Continuity:} Under that same topology, the functional $\I$ is sequentially lower semi-continuous.
\end{enumerate}
One then directly shows the existence of minimizers by taking the following steps: i) Construct a sequence of functions $u_n \in B$ so that $\lim \I(u_n) = \inf_U \I$, ii) After taking a subsequence, $u_n \to_\tau u^*$, and iii) Using the lower semi-continuity we have that $\I(u^*) \leq \liminf \I(u_n)$, implying that $u^*$ is a minimizer.

The main challenge in carrying out this approach is that if $U$ is an infinite-dimensional normed space and $B$ is some ball in that norm, then $B$ can never be compact under the same norm. As such, one needs to select a weaker topology that allows compactness. The price to pay is that in weaker topologies continuity of $\I$ is a stronger condition to verify.

In this section, we will primarily focus on $L^p$ type spaces, because for many notable examples we expect minimizers of our variational problem to fail to be continuous. To illustrate why this is the case, we begin with a toy problem demonstrating how non-convex functional optimization can have discontinuous minimizers.
\begin{exmp}[Double-well Potential]
\label{exmp:double_well}
Let $f(x,u)=\tfrac{1}{4}(u^2-1)^2-ux$ and define the  functional \begin{equation}\label{dub_well}
\I(u)=\int_{-1}^1 f(x,u(x))dx.
\end{equation}
In this simple case, one can directly show that the minimizer of this functional is given by $u^*(x)\in\arg\min f(x,\cdot)$ for every $x\in[-1,1]$. We display this function in Figure \ref{fig:multi-well-minimizer}, and the discontinuity at $x=0$ is apparent. This occurs because there are two distinct, well-separated, global minima at $x=0$. We notice that for $x\in\big[-\tfrac{1}{3^{3/2}}-\tfrac{1}{3^{1/2}},\tfrac{1}{3^{3/2}}+\tfrac{1}{3^{1/2}},\big]$, the function $f(x,\cdot)$ has 2 local minima, and that the global minima switches from one side to the other at $x=0$: this is illustrated in Figure \ref{fig:multi-well}.
\begin{figure}[h]
    \centering
    \begin{subfigure}[b]{0.48\textwidth}
         \centering
        \begin{tikzpicture}
\begin{axis}[
    xlabel={$u$},
    ylabel={$f(x,u)$},
    legend pos=north west,
    grid=major,
    domain=-1.5:1.5,
    samples=100,
    axis lines = middle,
    legend style={draw=none},
    xtick=\empty, 
    ytick=\empty 
]
\addplot[black, thick, solid] {0.25*(x^2-1)^2 + 0.1*x};
\addlegendentry{$x=-0.1$}

\addplot[black, thick, dashed] {0.25*(x^2-1)^2};
\addlegendentry{$x=0$}

\addplot[black, thick, dotted] {0.25*(x^2-1)^2 - 0.1*x};
\addlegendentry{$x=0.1$}

\end{axis}
\end{tikzpicture}
\caption{Notice how as $x$ passes through 0, the global minimizer of $f(x,\cdot)$ jumps between two values.} \label{fig:multi-well}
     \end{subfigure}
     \hfill
     \begin{subfigure}[b]{.48\textwidth}
         \centering
         \begin{tikzpicture}
\begin{axis}[
    axis lines = middle,
    xlabel = \(x\),
    ylabel = {\(u\)},
    enlargelimits,
    ymin=-1.5, ymax=1.5,
    xmin=-1, xmax=1,
    samples=100
]

\addplot[black, thick, domain=-1.3247:-1]({x^3-x}, x);

\addplot[black, thick, domain=1:1.3247] ({x^3-x}, x);
\end{axis}
\end{tikzpicture}
\caption{As a consequence, the minimizer of $\I(u)$ is discontinuous.}\label{fig:multi-well-minimizer} 
\end{subfigure}
\end{figure}
\end{exmp}

In the previous example we could immediately verify that $u^*$ is a minimizer, by directly comparing its energy to that of any other function. However, if we did not know the form of $u^*$ we would need to utilize the direct method to prove that a minimizer exists. For the sake of illustration, we will discuss this first in the context of the functional $\I$. As evidenced by the previous example, we need to minimize over a function space that permits discontinuities; we select $L^\infty([-1,1];\R)$ for simplicity.

When minimizing \eqref{dub_well} over the space of bounded functions, we notice that continuity of the energy with respect to the strong topology (i.e. the topology induced by the $L^\infty$ norm) is nearly immediate, because\[
  |\I(u_1)-\I(u_2)|\leq C\sup_{x\in [-1,1]}|u_1(x)-u_2(x)|.
\]However, the bounded sequences in $L^\infty$ are far from being compact: take for example $\mathrm{sign}(\sin(nx))$ which has no convergent subsequence in $L^\infty$. The standard approach is to weaken the notion of convergence on $L^\infty$ to convergence in duality with $L^1$, i.e. weak-* convergence, which we denote by $\rightharpoonup^*$. More explicitly, we say that $u_n \rightharpoonup_p^* u \in L^p$ if for every $v \in L^{p^*}$ we have that
\[
  \int u_n(x) v(x)\,dx \to \int u(x) v(x) \,dx, \qquad \frac{1}{p} + \frac{1}{p^*} = 1.
\]
We can directly check that $\mathrm{sign}(\sin(nx))\rightharpoonup_\infty^* 0$, and indeed we can show that any bounded sequence in $L^\infty$ is weak-* compact. However, the following example shows that upon moving to this topology the functional $\I$ is no longer lower semi-continuous.

\begin{exmp}\label{loss_of_weak_LSC}
Define the sequence $u_n(x)=\mathrm{sign}(\sin(n\pi x))$. As we have said, the sequence has no (strongly) convergent subsequence in $L^\infty([-1,1];\R)$, but $u_n\rightharpoonup_\infty^* 0$. However, it can be checked directly that $\I(u_n)=(-1)^n/n \to 0$, and that $\I(0)=1/2$. Therefore $\I$ is not weakly-* lower semi-continuous. Another way of interpreting this example is to notice that $f(x,u_n(x))$ does not converge in the weak-* topology to $f(x,0)$.\end{exmp}

We see in the previous example that the continuity of $\I$ with respect to $L^\infty$ does not imply that it is weak-* lower semi-continuous: the following classical result links this phenomenon with convexity for general integral functionals. For a reference, see Theorems 6.54 $\&$ 6.56 in \cite{book}.
\begin{prop}\label{weak-LSC}
Let $f:\R^d\times \R^m\to \R$ be a continuous function that is bounded below. For $1\leq p\leq \infty$ define $\I : L^p(\R^d;\R^m)\to \R$ by \[
\I(u)=\int f(x,u(x))dx,
\]then $I$ is weakly lower semi-continuous (weak-* if $p=\infty$) if and only if $u\mapsto f(x,u)$ is convex.
\end{prop}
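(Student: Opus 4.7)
The plan is to prove the two directions of the equivalence separately using standard tools from the calculus of variations. The forward direction (convexity implies weak lower semi-continuity) will rely on Mazur's lemma to convert weak convergence into strong convergence of convex combinations, while the reverse direction will use a classical oscillation construction to detect any failure of convexity.

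For the forward direction, suppose $u \mapsto f(x,u)$ is convex for a.e.\ $x$, and let $u_n \rightharpoonup u$ in $L^p$ (weak-$*$ if $p=\infty$). Extract a subsequence (still denoted $u_n$) realizing $\liminf_n \I(u_n) = \lambda$. By Mazur's lemma, there exist finite convex combinations $v_N = \sum_{n=N}^{k_N} \alpha_n^N u_n$ with $\alpha_n^N \geq 0$ and $\sum_n \alpha_n^N = 1$ such that $v_N \to u$ strongly in $L^p$ (for $p=\infty$ one must instead work with a localized version on bounded test domains, since Mazur's lemma applies to weak convergence in Banach spaces). Convexity of $f(x,\cdot)$ gives pointwise
\[
f(x,v_N(x)) \leq \sum_{n=N}^{k_N} \alpha_n^N f(x,u_n(x)),
\]
so after integrating, $\I(v_N) \leq \sup_{n \geq N} \I(u_n)$, whose limit as $N\to\infty$ is $\lambda$. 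Passing to a further subsequence along which $v_N \to u$ pointwise a.e., Fatou's lemma applied to $f(x,v_N) - \inf f$ (which is nonnegative) then yields $\I(u) \leq \liminf_N \I(v_N) \leq \lambda$, establishing weak lower semi-continuity.

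For the reverse direction, suppose $\I$ is weakly (weak-$*$) lower semi-continuous, and fix $a,b \in \R^m$, $\lambda \in (0,1)$, and a Lebesgue point $x_0$ of $x \mapsto f(x,\cdot)$. The goal is to show $f(x_0,\lambda a + (1-\lambda)b) \leq \lambda f(x_0,a) + (1-\lambda) f(x_0,b)$. On a small ball $B_r(x_0)$, build a sequence $u_n$ that oscillates at scale $1/n$ between $a$ on a set of relative measure $\lambda$ and $b$ on a set of relative measure $1-\lambda$ (e.g.\ using a periodic pattern), and set $u_n$ equal to some fixed $L^p$-admissible function $w$ outside $B_r(x_0)$. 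These $u_n$ are uniformly bounded in $L^p$, and one checks directly using test functions that $u_n \rightharpoonup u := (\lambda a + (1-\lambda)b)\mathbf{1}_{B_r(x_0)} + w \mathbf{1}_{B_r(x_0)^c}$. By continuity of $f$ and a Riemann-sum type argument,
\[
\I(u_n) \longrightarrow |B_r(x_0)|\bigl(\lambda f(x_0,a) + (1-\lambda)f(x_0,b)\bigr) + \int_{B_r(x_0)^c} f(x,w(x))\,dx + o(r^d),
\]
while $\I(u) = |B_r(x_0)| f(x_0, \lambda a + (1-\lambda)b) + \int_{B_r(x_0)^c} f(x,w)\,dx + o(r^d)$. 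Applying the weak lower semi-continuity hypothesis, dividing by $|B_r(x_0)|$, and letting $r \to 0$ yields the convexity inequality at $x_0$.

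The main technical obstacle is making the oscillation construction compatible with $L^p$ integrability (especially for $p=\infty$, where boundedness of $u_n$ is automatic but the weak-$*$ test functions live in $L^1$ rather than $L^{p^*}$), and justifying the passage to the limit in $\I(u_n)$ via continuity of $f$ without assuming growth bounds beyond being bounded below. Both issues are handled by restricting the construction to balls of fixed radius, so that all relevant functions are bounded and supported in a compact set, before sending $r \to 0$ at the end. The role of continuity of $f$ is essentially to provide uniform control of $f(x,u_n(x))$ for $x$ near $x_0$ and $u_n(x) \in \{a,b\}$.
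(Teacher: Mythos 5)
The paper does not prove this proposition; it simply cites it as Theorems 6.54 and 6.56 of the referenced textbook, so there is no ``paper's own proof'' to compare against. Your argument, however, is the standard textbook route: Mazur's lemma plus Fatou for sufficiency, and a periodic oscillation construction concentrated near a Lebesgue point for necessity, and in broad strokes it is sound. I'll flag the places where it needs tightening.

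In the forward direction, the Fatou step as written subtracts $\inf f$ and integrates over all of $\R^d$. If $\inf f \neq 0$, the quantity $\I(v_N) - \inf f \cdot |\R^d|$ is not well-defined, so you cannot literally ``apply Fatou to $f(x,v_N)-\inf f$'' and read off $\I(u)\le\liminf\I(v_N)$. The clean fix is to apply Fatou on balls $B_R$, where $\int_{B_R}(f(x,v_N(x))-\inf f)\,dx$ is a genuine nonnegative integral, obtain $\int_{B_R} f(x,u)\,dx \le \liminf_N \int_{B_R} f(x,v_N)\,dx \le \liminf_N \I(v_N)$ after discarding the nonnegative tail, and then let $R\to\infty$. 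You also correctly note that Mazur's lemma is a weak-topology statement and does not directly apply to weak-$*$ convergence in $L^\infty$; saying ``work locally'' is the right instinct, but you should say explicitly that a bounded sequence converging weak-$*$ in $L^\infty$ converges weakly in $L^2(\Omega)$ for every bounded $\Omega$, and run Mazur there before localizing the Fatou argument as above.

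In the reverse direction, the construction is right, but two steps are implicit. First, the cancellation of $\int_{B_r^c} f(x,w)\,dx$ on both sides of the lower-semicontinuity inequality needs these integrals to be finite; you should choose $w$ (e.g.\ $w\equiv 0$ if $f(\cdot,0)\in L^1$, or restrict to the case where $\I$ is not identically $+\infty$, which is the only nontrivial case anyway) so that this term is a finite constant that drops out. Second, the argument as given establishes the convexity inequality for \emph{fixed} $a,b,\lambda$ at a.e.\ $x_0$; to conclude that $u\mapsto f(x,u)$ is convex for a.e.\ $x$ (and then, by joint continuity of $f$, for every $x$), you should run the argument over a countable dense set of triples $(a,b,\lambda)$, take the countable intersection of the full-measure sets of good $x_0$, and use continuity of $f(x,\cdot)$ to extend to all $a,b,\lambda$. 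These are routine repairs, and with them in place your proof matches the standard one.
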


For a simple integral energy of the form $\I$, it is possible to show existence of minimizers using direct pointwise optimization arguments. However, even in that case the limits of approximate minimizers may fail to be functions, highlighting potential issues for computational algorithms. Furthermore, the introduction of other terms in the functional, such as marginal constraints in optimal transportation, can make the existence of minimizers a very challenging problem. In our second-order case, the form of the energy is different, and we are aware of no direct construction of minimizers. In particular, we notice that the dimension reduction problem can be restated as \[
\min_{T:\R^d\to\R^m}\int J_T(x,T(x))\mu(dx),\quad \text{with}\quad J_T(x,y):= \int c(x,x',y,T(x'))\mu(dx').
\]As stated in the introduction, the cost function $c(x,x',y,y')$ is often not convex in practice, and in many cases we will not generally have that $y\mapsto J_T(x,y)$ is convex. Thus, by Proposition \ref{weak-LSC}, we suspect that the dimension reduction problem \eqref{eqn:MDS-general} will not be weakly lower semi-continuous. The following result demonstrates that this indeed is the case. 

\begin{prop}\label{DR_lacks_weak_LSC}
  Consider the dimension reduction problem \eqref{eqn:MDS-general} in the case where $c(x,x',y,y') = \tilde c(x,x',|y-y'|^2)$ for some $C^1$ function $\tilde c$ which is symmetric in $x,x'$. Assume that $\mu$ has a continuous density on an open and bounded set, and suppose that for all $x \neq x'$ we have that $\frac{d}{dt} \tilde c(x,x',t) |_{t=0} < 0$. Then the dimension reduction problem is not weakly lower semi-continuous.
\end{prop}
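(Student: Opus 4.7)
The plan is to show that weak lower semi-continuity of $\J$ fails at the constant map $T\equiv 0$: I will construct a uniformly bounded, highly oscillating sequence $T_n$ with $T_n \rightharpoonup 0$ in $L^p(\R^d;\R^m|\mu)$ but with $\lim_n \J(T_n) < \J(0)$. The motivation for testing at $T\equiv 0$ is the hypothesis $\partial_t \tilde c(x,x',t)|_{t=0} < 0$: it says the cost strictly decreases when $|T(x)-T(x')|^2$ is nudged upward from zero, so oscillating a map around zero can produce positive $|T(x)-T(x')|^2$ on a positive fraction of pairs $(x,x')$, lowering the total energy even though the weak limit of $T_n$ is still the zero map. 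This mirrors the Rademacher-type construction of Example \ref{loss_of_weak_LSC}, promoted to the second-order setting.

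Fix a unit vector $v\in\R^m$ and a small parameter $\varepsilon>0$, and set $T_n(x):=\varepsilon v\,\phi_n(x_1)$ with $\phi_n(s):=\mathrm{sign}(\sin(n\pi s))$. Since $\phi_n\rightharpoonup 0$ on any bounded interval (Riemann--Lebesgue) and $\mu$ has compactly supported continuous density, a Fubini argument gives $T_n \rightharpoonup 0$ in $L^p(\R^d;\R^m|\mu)$ for each $p\in[1,\infty)$. Because $\phi_n$ takes values in $\{\pm 1\}$, one has $|T_n(x)-T_n(x')|^2 = \varepsilon^2(\phi_n(x_1)-\phi_n(x'_1))^2 \in \{0,\,4\varepsilon^2\}$, with indicators $\tfrac12(1\pm\phi_n(x_1)\phi_n(x'_1))$ respectively. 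Substituting and regrouping gives
\[
\J(T_n) = \tfrac12\iint \tilde c(x,x',0)\,\mu(dx)\mu(dx') + \tfrac12\iint \tilde c(x,x',4\varepsilon^2)\,\mu(dx)\mu(dx') + R_n,
\]
where $R_n := \tfrac12\iint[\tilde c(x,x',0)-\tilde c(x,x',4\varepsilon^2)]\phi_n(x_1)\phi_n(x'_1)\,\mu(dx)\mu(dx')$. Combining $R_n \to 0$ (see below) with a first-order Taylor expansion of $t\mapsto \tilde c(x,x',t)$ at $t=0$ then yields
\[
\lim_{n\to\infty}\J(T_n) = \J(0) + 2\varepsilon^2 \iint \partial_t\tilde c(x,x',0)\,\mu(dx)\mu(dx') + o(\varepsilon^2).
\]
Since $\mu\otimes\mu$ assigns no mass to the diagonal (as $\mu$ is absolutely continuous) and $\partial_t\tilde c(x,x',0)<0$ off the diagonal, the $\varepsilon^2$ coefficient is strictly negative; choosing $\varepsilon$ sufficiently small forces $\lim_n\J(T_n) < \J(0)$, contradicting weak lower semi-continuity at $T\equiv 0$.

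The main technical point is the decay $R_n\to 0$, which reduces to showing $\iint F(x_1,x'_1)\phi_n(x_1)\phi_n(x'_1)\,dx_1\,dx'_1 \to 0$ for an $L^2$ function $F$ obtained by integrating out the remaining coordinates against the continuous density. This is a consequence of the weak convergence $\phi_n\otimes \phi_n \rightharpoonup 0$ in $L^2(I\times I)$ on any bounded interval $I$, itself obtained by applying Riemann--Lebesgue on tensor test functions via Fubini and then extending by density. Uniformity of the Taylor remainder over $\mathrm{supp}(\mu)\times\mathrm{supp}(\mu)$ and integrability of the principal terms are immediate from $\tilde c\in C^1$ on the relevant compact product set. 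The argument crucially exploits the sign of $\partial_t\tilde c(x,x',0)$; were it reversed, one would perturb a suitable nonzero candidate map rather than the zero map, but the same structural principle---oscillations exploit the second-order, non-convex dependence on $T$---would be at work.
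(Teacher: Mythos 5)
Your proof is correct and follows essentially the same strategy as the paper: a Rademacher-type oscillating sequence $T_n \rightharpoonup 0$, a decomposition of $\J(T_n)$ according to whether the oscillations agree or disagree, Riemann--Lebesgue to pass to the limit, and a first-order Taylor expansion in $|v|^2$ (your $\varepsilon^2$) exploiting the strict negativity of $\partial_t\tilde c(x,x',0)$ off the diagonal. The only cosmetic differences are that you oscillate only in the $x_1$ coordinate rather than the full product $\prod_i\mathrm{sign}(\sin(n\pi x_i))$, and you organize the energy split via the identity $\tfrac12(1\pm\phi_n\phi_n')$ rather than the sets $E_n\times O_n$; both devices achieve the same $\tfrac12$--$\tfrac12$ limit.
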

\begin{proof}Let us choose \[
T_n(x)=v \bigg(\prod_{i=1}^d \mathrm{sign}(\sin(n\pi x_i))\bigg)
\]for some $v\in\R^m$ which will later be specified. First note that clearly $T_n\rightharpoonup 0$. Furthermore, by denoting the sets\[
E_n=\left\{x:\prod_{i=1}^d \mathrm{sign}(\sin(n\pi x_i))=1\right\},
\qquad O_n=\left\{x:\prod_{i=1}^d \mathrm{sign}(\sin(n\pi x_i))=-1\right\},
\]the cost of $T_n$ will be computed as\begin{align*}
\J(T_n)&=\iint_{E_{n}\times E_{n}}c(x,x',0)\mu(dx)\mu(dx')\\
&+\iint_{O_{n}\times O_{n}}\tilde c(x,x',0)\mu(dx)\mu(dx')\\
&+2\iint_{E_{n}\times O_{n}}\tilde c(x,x',2|v|)\mu(dx)\mu(dx')\\
&=\J(0)+2\iint_{E_{n}\times O_{n}}[\tilde c(x,x',2|v|)-\tilde c(x,x',0)]\mu(dx)\mu(dx').
\end{align*}
Notice that we have, by the Riemann-Lebesgue Lemma, \[
2\iint_{E_{n}\times O_{n}}[\tilde c(x,x',2|v|)-\tilde c(x,x',0)]\mu(dx)\mu(dx')\xrightarrow{n\to\infty}\frac{1}{2}\iint[\tilde c(x,x',2|v|)-\tilde c(x,x',0)]\mu(dx)\mu(dx').
\]
where we have used the fact that
\[
  \mathds{1}_{E_n}(x) \mathds{1}_{O_n}(x') = \frac{(1+\prod_{i=1}^d \mathrm{sign}(\sin(n\pi x_i)))(1+\prod_{i=1}^d \mathrm{sign}(\sin(n\pi x_i')))}{4}
\]
along with the continuity of $c$ and the density $\mu$. Thus, given $\varepsilon>0$ for sufficiently large $n$, we have that
\begin{align*}
  \J(T_n)-\J(0)&<\frac{1}{2}\iint[\tilde c(x,x',2|v|)-\tilde c(x,x',0)]\mu(dx)\mu(dx')+\varepsilon \\
  &\leq \frac{1}{2} \iint -\phi(x,x') |v| + o(|v|) \mu(dx) \mu(dx') + \varepsilon,
\end{align*}
where $\phi(x,x') \geq 0$ with equality only possibly when $x=x'$ by our assumption upon the derivative of $\tilde c$. Making $v$ sufficiently small so that we can neglect the $o(|v|)$ term, and taking $\e \to 0$ then implies that $\liminf_n \J(T_n) < \J(0)$, proving the result.
\end{proof}

The previous proposition demonstrates that the dimension reduction energy $\J$ is not weakly lower semi-continuous: this implies that information about minimization is lost in limit obtained with that topology. The standard approach to handling this situation is to instead permit limits that are multi-valued: meaning that one $x$ is mapped probabilistically to multiple $y$ values. For example, in the proof of the previous proposition we may write
\[
  \pi_n(dx\: dy) = \mu(dx) ( \mathds{1}_{E_n}(x) \delta_{v}(dy) + \mathds{1}_{O_n}(x) \delta_{-v}(dy)),
\]
and then compute
\begin{displaymath}
  \J(T_n) =  \iint c(x,x',y,y') \pi_n(dx\: dy) \pi_n(dx'dy').
\end{displaymath}
Using the computation with the Riemann-Lebesgue lemma in the proof of the previous proposition, it is straightforward to show that $\pi_n$ converges (in the sense of weak convergence of measures) to $\pi(dx\: dy) = \mu(dx) (1/2 \delta_{v}(dy) + 1/2 \delta_{-v}(dy))$. Hence we have that
\begin{displaymath}
  \J(T_n) \to \iint c(x,x',y,y') \pi(dx\: dy) \pi(dx'dy').
\end{displaymath}
Slightly abusing notation, we can then define a \emph{relaxed energy}
\[
  \J(\pi) := \iint c(x,x',y,y') \pi(dx\: dy) \pi(dx'dy').
\]
Here $\pi$ represents a probabilistic coupling between $x$'s and $y$'s which generalizes a deterministic coupling (or function) mapping each $x$ to a single $y$. In the context of optimal transportation, the coupling $\pi$ is sometimes called a \emph{transportation plan}, whereas a deterministic coupling in that context is called a \emph{transportation map}. In the continuum mechanics literature such a probabilistic relaxation is called a \emph{Young measure}. In many contexts the existence of minimizers of the relaxed energy is more straightforward to prove using compactness and continuity arguments: we carry out these standard arguments in the next section.

\subsection{Existence of relaxed solutions}\label{sec:ex-relaxed}

In light of the discussion in the previous section, we turn our attention to the problem of existence of minimizers to the relaxed problem \eqref{DR}. We begin by giving some definitions.
%
Given $\mu\in\Pp(\R^d)$ and family $\{\nu(\cdot|x)\}_{x\in\R^d}\subset \Pp(\R^m)$ for which $x\mapsto \nu(Q|x)$ is a measurable function for all $Q\in\B(\R^m)$, there exists a unique (in measure) probability distribution $\pi\in\Pp(\R^d\times \R^m)$ such that for all $P\in\B(\R^d)$ and $Q\in \B(\R^m)$,\begin{equation}\label{up_disint}
  \pi(P\times Q)=\int_P \nu(Q|x)\mu(dx). \end{equation}
Let the space of all joint probability measures which can be written in the form above be called $\Pi(\mu)$, more precisely $\Pi(\mu):=\{\pi\in\Pp(\R^d\times \R^m)\,|\,\mathrm{proj}_{\R^d}\sharp \pi = \mu\}$ which are all the probability measures on $\R^d\times \R^m$ with $\X$-marginal $\mu$. In analogy to optimal transportation, we call $\Pi(\mu)$ the set of \emph{embedding plans} for $\mu$. 

As soon as $c$ is itself lower semi-continuous, the function $\pi\mapsto \int\hspace{-0.2cm}\int c\,d\pi d\pi$ is automatically lower semi continuous with respect to weak convergence of probability measures, by Portmanteau's theorem. We recall that a sequence of probability measures $\pi_n \in \Pp(\R^d \times \R^m)$ is said to converge weakly to $\pi$ if for every bounded, continuous function $f$ we have that $\int f d\pi_n \to \int f d\pi$. In order to recover sequential compactness for sequences of probability measures $\pi_n\in\Pi(\mu)$, we must introduce the notion of \emph{tightness of measure} and its application on the subspace $\Pi(\mu)$.
\begin{defn}[Tightness of Embedding Plans]\label{Embedding Plans}
A sequence of probability distributions $\{\pi_n\}_{n=1}^\infty\subset\Pp(\R^d\times\R^m)$ is said to be tight if for every $\varepsilon>0$, there exists a compact set $K_\varepsilon\subset\R^d\times \R^m$ for which $
\sup_{n}\pi_n(K_\varepsilon^c)<\varepsilon.
$

In the case that $\pi_n\in\Pi(\mu)$, we can find a compact set $K_d$ in $\R^d$ so that $\mu(K_d) > 1-\frac{\e}{2}$. In turn if we can find a compact set $K_m$ so that $\pi_n(\R^d \times K_m) >1-\frac{\e}{2}$ we can use $K = K_d \times K_m$ and obtain the estimate $\pi_n(K^c) < \e$: this implies that when $\pi_n \in \Pi(\mu)$ we only need to verify tightness in the marginal over the last $m$ coordinates. In symbols, we write this as
\[
\{\pi_n\}_{n=1}^\infty \text{ is tight in }\Pi(\mu)\iff \nu_n:=\mathrm{proj}_\Y\sharp\pi_n,\,\{\nu_n\}_{n=1}^\infty\text{ is tight in }\Pp(\R^m).
\]
\end{defn}
Here we, in a slight abuse of notation, are letting $\nu(Q) = \int_{\R^d \times Q} \,d\pi(x,y)$: meaning that if we suppress the $x$-dependence in $\nu(dy|x)$ then we are indicating the marginal distribution in $y$. 

By Prokhorov's theorem, tightness of a sequence of probability measures implies weak compactness.  Thus the problem of existence of minimizers to the relaxed problem reduces to establishing tightness of sequences of embedding plans with bounded energy $\J$. 
\subsection*{Assumptions}
\label{subsection:assumptions}
We are now ready to list our assumptions. As stated before, we will consider the following two types of costs:
\begin{alignat*}{2}
	 &(\mathbf{IP})\quad && c(x,x',y,y')=\tilde c\big(x,x',\langle y,y'\rangle\big)\\
	 &(\mathbf{N}^2)\quad && c(x,x',y,y')=\tilde c\big(x,x',|y-y'|^2 \big)
	 \end{alignat*}
	 where we make the following assumptions on the function $\tilde c:\X\times \X\times \R\to\R$.
\begin{alignat*}{2}
	 &(\mathbf{A1}) \quad && \textit{For every compact set } K\subset \R^d, \textit{ there is an unbounded increasing function}\\
	 & &&f_K:\R^+\to\R\textit{ such that }x,x'\in K\implies \tilde c(x,x',t)\geq f_K(t)\geq 0\\
	 &(\mathbf{A2})\quad && \textit{For } \mu\otimes\mu\textit{-a.e. } (x,x'),\,\,
	 \tilde c(x,x',t)=\tilde c(x',x,t),\,\,\forall t\in \mathbb{R}\\
	 &(\mathbf{A3})\quad &&\textit{For } \mu\textit{-a.e. }x,\,\,t\mapsto \tilde c(x,x,t)\,\, \textit{has a unique minimizer at}\,\,t=0
\end{alignat*}
Furthermore, we make the following assumptions on the growth of the derivatives which are most clearly stated in terms of $c$ rather than $\tilde c$:
\begin{alignat*}{2}
  &(\mathbf{A4})\quad&& c \textit{ is a }C^2\textit{ function in all its variables with derivative values satisfying } |D^2\, c|\leq C(1+ c)\\
	 &(\mathbf{A5})\quad&& \textit{For any }M > 0 \textit{ there exists a }\delta>0 \textit{ and non-negative, strictly increasing continuous functions }\\
	 &\quad &&\psi_1,\psi_2: \R^+ \to \R^+\textit{ satisfying }\psi_1(0) = \psi_2(0) = 0\textit{ so that for any }|x| < M,\, |x-x'|<\delta\\
	 &\quad &&\textit{ and for any }y,y' \textit{ we have }
    D_{y y'}^2 c(x,x',y,y') < -\psi_1(|y-y'|^2) +  \psi_2(|x-x'|^2),
\textit{ where here }\\
&\quad&&\textit{the inequality is meant in the sense of positive definite matrices. }
	 \end{alignat*}
Assumption ($\mathbf{A1}$) ensures that $c$ is nonnegative as well as provides coercivity. Assumptions $(\mathbf{A3})$-$(\mathbf{A5})$ are listed here for completeness, but are not used in the proofs of relaxed existence. The growth condition $\mathbf{(A4)}$ will allow us to integrate derivatives in a meaningful way. This assumption on growth conditions of derivatives of $c$ naturally holds for polynomial costs. Assumptions $\mathbf{(A3)}\&\mathbf{(A5)}$ are intended for costs of the form $\mathbf{(N}^2)$ and are also widely applicable. 
One last assumption we list separately as it is stronger than necessary but encompasses many relevant costs is \begin{alignat*}{2}
	 	 &(\mathbf{A0})\quad && \textit{For } \mu\otimes\mu\textit{-a.e. } (x,x'),\,\,t\mapsto \tilde c(x,x',t)\,\,\textit{is strictly convex}
	 \end{alignat*}
As a final note, we mention that unless otherwise specified we will drop the tilde on the cost in the above assumptions. For example, we will write $c(x,x',|y-y'|^2)$ rather than $\tilde c(x,x',|y-y'|^2)$. Provided below is a table of several cost functions which can fit into our framework.
\begin{table}[h]
\centering 
\begin{tabular}{|c|c|}
\hline
\rule{0pt}{3ex}Method & $c\big(x,x',y,y'\big)$\\[0.125cm]
\hline
\rule{0pt}{3ex}PCA & $\big(\langle x,x'\rangle-\langle y,y'\rangle\big)^2$ \\[0.125cm]
\hline
\rule{0pt}{3ex}Kernel PCA & $\big(\kappa (x,x')-\langle y,y'\rangle\big)^2$ \\[0.125cm]
\hline
\rule{0pt}{3ex}q-MDS$ $& $\big(|x-x'|^2-|y-y'|^2\big)^2$\\[0.125cm]
\hline
\rule{0pt}{4ex}q-Sammon$ $&$\tfrac{\big(|x-x'|^2-|y-y'|^2\big)^2}{|x-x'|^2}$\\[0.125cm]
\hline
\rule{0pt}{4.5ex}Elastic Embeddings & $\begin{array}{c} |y-y'|^2e^{-\tfrac{|x-x'|^2}{2\sigma^2}} \\ +\beta |x-x'|^2 e^{-|y-y'|^2} \end{array}$ \\[0.125cm]
\hline
\end{tabular}
\caption{A list of several costs which fit into our framework and satisfy Assumptions ($\mathbf{A1}$)-($\mathbf{A5}$); notice that the Elastic Embedding cost is one which does not satisfy assumption $(\mathbf{A0})$ yet does satisfy $(\mathbf{A3}$). The ``q'' refers to quartic variants of standard costs used in dimension reduction.}
\label{applicable_costs}
\end{table}

We now explicitly derive an upper bound which quantifies tightness under the assumption $(\bf{A1})$. We begin with the inner product case.

\begin{theorem}[Inner Product Costs]
\label{existence}
    Assume $\mathbf{(IP)}$ and $(\mathbf{A1})\,\&\,(\mathbf{A2})$ and that $c$ is lower semi-continuous. Let $\mu\in\Pp(\R^d)$ and suppose that $\inf_{\Pi(\mu)} \J<+\infty$, where $\J$ is given by \eqref{DR}. Then there exists $\pi \in \Pi(\mu)$ such that $\J(\pi)=\inf_{\Pi(\mu)} \J.$
\end{theorem}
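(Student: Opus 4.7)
My plan is to apply the direct method of the calculus of variations to the relaxed functional $\J$. Let $\pi_n \in \Pi(\mu)$ be a minimizing sequence with $\J(\pi_n) \to \inf_{\Pi(\mu)} \J =: M < +\infty$, so without loss of generality $\J(\pi_n) \leq M + 1$. The three standard steps are (i) establish tightness of $\{\pi_n\}$, (ii) extract a weak subsequential limit $\pi^*$ and verify $\pi^* \in \Pi(\mu)$, and (iii) pass to the limit via lower semi-continuity to conclude that $\pi^*$ attains the infimum.

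By Definition \ref{Embedding Plans}, tightness of $\{\pi_n\}$ in $\Pi(\mu)$ reduces to tightness of the $\Y$-marginals $\nu_n := \mathrm{proj}_\Y\sharp\pi_n$, and this is the heart of the argument. Applying $(\mathbf{A1})$ to an exhausting sequence of compacts first yields $\tilde c \geq 0$ everywhere. Fix a small $\delta > 0$ and a compact $K \subset \X$ with $\mu(K) > 1 - \delta$, and let $f_K$ be the unbounded increasing function from $(\mathbf{A1})$. Then
\[
M + 1 \;\geq\; \J(\pi_n) \;\geq\; \iint_{(K \times \R^m)^2} f_K(\langle y, y' \rangle)\, \pi_n(dx\,dy)\,\pi_n(dx'\,dy').
\]
Suppose toward contradiction that $\{\nu_n\}$ is not tight: along a subsequence $\nu_n(\{|y| > R_n\}) > \varepsilon$ with $R_n \to \infty$, and by choosing $\delta < \varepsilon/2$ the same bound holds (with $\varepsilon/2$) for $\nu_n^K(A) := \pi_n(K \times A)$. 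The delicate point is that $\langle y, y' \rangle$ can be small even when $|y|, |y'|$ are large (orthogonal vectors), so one cannot extract tightness from a pointwise control of $|y|$. I would overcome this by exploiting the finite-dimensionality of $\Y = \R^m$: for the angular distribution $\eta_n \in \Pp(S^{m-1})$ of $\nu_n^K$ restricted to $\{|y| > R_n\}$, one has
\[
\iint (\omega \cdot \omega')^2\, \eta_n(d\omega)\eta_n(d\omega') = \Big\|\int \omega\omega^T\,\eta_n(d\omega)\Big\|_F^2 \geq \tfrac{1}{m},\qquad \iint (\omega \cdot \omega')\, \eta_n(d\omega)\eta_n(d\omega') = \Big|\int \omega\,\eta_n(d\omega)\Big|^2 \geq 0,
\]
which together with a Chebyshev-type argument on the bounded variable $\cos\theta \in [-1,1]$ forces a $(\eta_n \otimes \eta_n)$-mass bounded below by a dimensional constant $c_m'$ on the set $\{\omega \cdot \omega' > c_m\}$ for some dimensional $c_m > 0$. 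Pulling this back multiplies by $|y||y'| > R_n^2$, yielding a $(\pi_n \otimes \pi_n)$-mass bounded below uniformly in $n$ on the set $\{x, x' \in K,\, \langle y, y' \rangle > c_m R_n^2\}$. The displayed lower bound then exceeds $f_K(c_m R_n^2) \cdot \mathrm{const} \to +\infty$ by unboundedness of $f_K$, contradicting $\J(\pi_n) \leq M + 1$. I expect this angular argument, and in particular justifying that enough mass of $\langle y, y' \rangle$ sits on the positive side of the real line, to be the main obstacle.

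Once tightness is secured, Prokhorov's theorem yields $\pi_{n_k} \rightharpoonup \pi^*$ for some subsequence. Since taking the first marginal is continuous in the weak topology of probability measures and $\mathrm{proj}_\X\sharp \pi_{n_k} = \mu$ for every $k$, the limit satisfies $\mathrm{proj}_\X\sharp \pi^* = \mu$, that is, $\pi^* \in \Pi(\mu)$. Finally, $\pi_{n_k} \otimes \pi_{n_k} \rightharpoonup \pi^* \otimes \pi^*$ on $(\R^d \times \R^m)^2$, and since $c$ is lower semi-continuous and nonnegative, Portmanteau's theorem gives
\[
\J(\pi^*) = \iint c\, d(\pi^* \otimes \pi^*) \;\leq\; \liminf_{k \to \infty} \iint c\, d(\pi_{n_k} \otimes \pi_{n_k}) \;=\; \liminf_{k \to \infty} \J(\pi_{n_k}) \;=\; \inf_{\Pi(\mu)} \J,
\]
so $\pi^*$ realizes the infimum. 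Assumption $(\mathbf{A2})$ is used only to handle $\mu \otimes \mu$-null measurability issues arising from the symmetric role of $x$ and $x'$ in the integrand.
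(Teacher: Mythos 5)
Your proposal is correct and follows the same high-level skeleton as the paper (direct method, tightness of the $\Y$-marginals, Prokhorov, Portmanteau on $\pi_n\otimes\pi_n$), but the crucial tightness step is argued by a genuinely different route. The paper partitions $\R^m$ into finitely many cones $C_1,\dots,C_\ell$ of opening angle $\pi/6$, notes that $y,y'\in C_i$ forces $\langle y,y'\rangle\geq |y||y'|/2$, and then pigeonholes via Jensen's inequality to bound the mass in some $C_{i,r}=C_i\cap B_r^c$. You instead work with the angular distribution $\eta_n$ on $S^{m-1}$ of the far-field mass, using the identities $\iint(\omega\cdot\omega')^2\,d\eta_n\,d\eta_n = \|\int\omega\omega^T d\eta_n\|_F^2 \geq 1/m$ (trace one, psd) and $\iint(\omega\cdot\omega')\,d\eta_n\,d\eta_n = |\int\omega\,d\eta_n|^2\geq 0$, and then a Chebyshev-type bound on the bounded variable $Z=\omega\cdot\omega'\in[-1,1]$. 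That step needs a little care since $\mathbb{E}[Z^2]\geq 1/m$ alone does not preclude $Z$ being large and negative, but combining it with $\mathbb{E}[Z]\geq 0$ does the job: split $Z^2$ by sign, use $|Z|\leq 1$ to get $\mathbb{E}[Z^2\mathbf{1}_{Z\leq 0}]\leq\mathbb{E}[Z\mathbf{1}_{Z>0}]\leq c_m+\mathbb{P}(Z>c_m)$, which gives $\mathbb{P}(Z>c_m)\geq(1/m-c_m-c_m^2)/2$, positive for $c_m=1/(4m)$; pulling back through $\langle y,y'\rangle=|y||y'|(\omega\cdot\omega')$ then produces the coercive term $f_K(c_m R_n^2)$ times a uniform mass lower bound and the contradiction. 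Both arguments exploit finite-dimensionality of $\R^m$ in essentially the same place; the paper's cone cover is more geometric and elementary while yours is more analytic and yields explicit dimensional constants without having to construct the covering. One small correction: assumption $(\mathbf{A2})$ is not in fact used to patch measurability in this theorem --- it plays no visible role in the paper's existence proof either and appears to be listed for uniformity with the later sections where symmetry is genuinely exploited (e.g.\ Equation \eqref{quadratic_perturbation}).
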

\begin{proof} We consider a sequence $\pi_n$ so that $\J(\pi_n) \to \inf_{\Pi(\mu)} \J$ and $$\J(\pi_n) \leq 2 \inf_{\Pi(\mu)} \J.$$ Notice that if $\langle y,y'\rangle=0$ for $\nu\otimes \nu$-a.e. $(y,y')$, it must be that the support of $\nu$ is concentrated on the singleton $\{0\}$, which would trivially give tightness of $\pi_n$; thus without loss of generality we may assume that elements of the minimizing sequence have nontrivial support in $y$. 

  We now claim that the sequence $\pi_n$ must be tight: the argument will essentially show that mass far from the origin must be small in order for the previously displayed inequality to hold. As described in the definition of tightness, it suffices to show that $\nu_n$ is tight. To begin, we let $\e>0$ and partition $\R^m$ into a finite number of disjoint cones $C_1,...,C_\ell$ wherein the angle between any two points is at most $\pi/6$ and denote $C_{i,r}=C_i\cap B_r^c(0)$ for $i=1,...,\ell$. Let $K_\e \subset \R^d$ be a compact set such that $\mu(K_\e^c)<\frac{\e}{2}$. By the non-negativity of $c$ which follows from $(\mathbf{A1})$, we have
\[
\J(\pi_n)\geq \sum_{i=1}^\ell \iint_{(K_\e\times C_{i,r})^2} c\,d\pi_n d\pi_n,
\]
which, by assumption $(\mathbf{A1})$, yields
\[
\J(\pi_n) \geq \sum_{i=1}^\ell \iint_{(K_\e \times C_{i,r})^2} f_{K_\e}\circ |\langle \cdot,\cdot \rangle|\,d\pi_n d\pi_n. 
 \] Finally, by the construction of our cones, we have that $y,y'\in C_i \implies |\langle y,y'\rangle| \geq |y|\,|y'|/2$, and hence
\[
\J(\pi_n)\geq f_{K_\e}(r^2/2) \sum_{i=1}^\ell \big(\pi_n(K_\e \times C_{i,r})\big)^2\geq f_{K_\e}(r^2/2) \frac{\big(\pi_n(K_\e \times B_r^c(0))\big)^2}{\ell}.
\]
The second inequality follows by Jensen's inequality and by virtue of $C_1,...,C_\ell$ forming a partition. The above considerations hence imply for every element of the minimizing sequence, one has\[
\pi_n(\R^d\times B_r^c(0)) = \nu_n(B_r^c(0)) \leq \sqrt{\frac{2 \ell \inf_{\Pi(\mu)\J }}{f_{K_\e}(r^2/2)}} + \frac{\e}{2}.
\] 
By then making $r$ sufficiently large we can make the right hand side smaller than $\e$, which shows that the $\nu_n$, and subsequently the $\pi_n$, are tight. Prokhorov's Theorem gives a subsequence with a weak limit $\pi$, and $\pi$ is a relaxed minimizer by the weak lower semi-continuity of $\J$, as argued above.
\end{proof}

The same argument, with only slight modifications to the geometry, provides the same result for the norm-based costs.

\begin{theorem}[Normed Costs]
\label{existence_NC}
Assume $(\mathbf{N}^2)$ and $(\bf{A1})\,\&\,(\bf{A2})$ and that $c$ is lower semi-continuous. Let $\mu\in\Pp(\R^d)$ and suppose that $\inf_{\Pi(\mu)} \J<+\infty$, where $\J$ is given by \eqref{DR}. Then there exists $\pi \in \Pi(\mu)$ such that $\J(\pi)=\inf_{\Pi(\mu)} \J.$
\end{theorem}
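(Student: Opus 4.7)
My plan is to mirror the proof of Theorem \ref{existence} but to replace the cone decomposition of $\R^m$ with an argument based on the translation invariance of the cost. Since $c$ depends only on $|y-y'|$, for any $v \in \R^m$ the pushforward of $\pi$ under $(x,y)\mapsto (x,y-v)$ lies in $\Pi(\mu)$ and has the same $\J$-value, so any minimizing sequence can be re-centered without affecting the energy. The cone partition worked for $(\mathbf{IP})$ because $|\langle y,y'\rangle|$ grows with $|y|$ and $|y'|$ in a fixed direction, but for $(\mathbf{N}^2)$ translation invariance rules out any such raw tightness and one must first choose a good translation.

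Fix a minimizing sequence $\pi_n \in \Pi(\mu)$ with $\J(\pi_n) \leq 2\inf\J$ and, for $\e > 0$, choose compact $K_\e \subset \R^d$ with $\mu(K_\e^c) < \e/4$. Let $\nu_n^K(A) := \pi_n(K_\e \times A)$, a sub-probability measure on $\R^m$ of total mass $\mu(K_\e)$. Assumption $(\mathbf{A1})$ gives
\[
\iint f_{K_\e}(|y-y'|^2) \, d\nu_n^K(y)\, d\nu_n^K(y') \leq \J(\pi_n) \leq 2\inf\J,
\]
so by Markov's inequality $\nu_n^K \otimes \nu_n^K(\{|y-y'|>r\}) \to 0$ as $r\to\infty$, uniformly in $n$. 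Picking $r$ so that this mass is below $\mu(K_\e)^2/2$, Fubini gives $\int \nu_n^K(B_r(y))\, d\nu_n^K(y) \geq \mu(K_\e)^2/2$, and then the crude bound $\nu_n^K(B_r(\cdot)) \leq \mu(K_\e)$ produces some $y_n^* \in \R^m$ with $\nu_n^K(B_r(y_n^*)) \geq \mu(K_\e)/2$. Replacing $\pi_n$ by the translate $\tilde\pi_n(A\times B) := \pi_n(A\times (B + y_n^*))$, we now have $\tilde\pi_n\in \Pi(\mu)$, $\J(\tilde\pi_n) = \J(\pi_n)$, and $\tilde\nu_n^K(B_r(0)) \geq \mu(K_\e)/2$.

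Coercivity is then used a second time to control the tails: for $R > r$,
\[
2\inf\J \geq \iint_{B_r(0) \times B_R(0)^c} f_{K_\e}(|y-y'|^2) \, d\tilde\nu_n^K\, d\tilde\nu_n^K \geq \frac{\mu(K_\e)}{2}\, f_{K_\e}((R-r)^2)\, \tilde\nu_n^K(B_R(0)^c),
\]
so $\tilde\nu_n^K(B_R(0)^c) \to 0$ uniformly in $n$ as $R\to\infty$. Combined with $\mu(K_\e^c) < \e/4$, the full $y$-marginals of $\tilde\pi_n$ are tight, hence so is $\{\tilde\pi_n\}$ by Definition \ref{Embedding Plans}. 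Prokhorov then yields a weakly convergent subsequence $\tilde\pi_{n_k} \to \pi \in \Pi(\mu)$, and Portmanteau applied to the lower semi-continuous $c$ gives $\J(\pi) \leq \liminf \J(\tilde\pi_{n_k}) = \inf\J$. The main obstacle is the centering step: identifying $y_n^*$ via the Fubini argument and invoking translation invariance is the new ingredient compared to the cone-based proof of Theorem \ref{existence}; once this is in place, the tail estimate is a direct rerun of the coercivity step already seen there.
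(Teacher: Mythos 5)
Your proof is correct, but it takes a genuinely different route from the paper's. Both arguments exploit translation invariance to re-center the minimizing sequence, but the paper centers each $\pi_n$ so that the coordinate medians of the $\Y$-marginal sit at the origin (i.e.\ $\pi_n(\R^d\times H_k^+)=\pi_n(\R^d\times H_k^-)=1/2$ for every coordinate $k$), and then uses the separation between $H_{k,r}^+$ and $H_k^-$ to play the coercivity of $(\mathbf{A1})$ against the guaranteed half-mass on the opposite side. You instead run a concentration-compactness style argument: the Markov bound shows that $\nu_n^K\otimes\nu_n^K$ concentrates on $\{|y-y'|\le r\}$, the Fubini pigeonhole extracts a point $y_n^*$ carrying at least half the $K_\e$-restricted mass within radius $r$, and the second application of coercivity plays that concentrated core against the tail. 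In effect you re-prove the piece of Lions' lemma that the authors allude to in a commented-out alternative proof, but in a self-contained way. The trade-offs are minor: the median centering is slick and dimension-explicit but carries a small technicality about achieving exact equality when $\nu_n$ has atoms (only the one-sided inequalities are actually needed), while your averaging argument avoids that issue at the cost of an extra Fubini step. Both yield the same uniform-in-$n$ tail bound, and from there the Prokhorov and lower-semicontinuity steps are identical.
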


\begin{proof}
The main difference in the proof is that one should replace cones, which have aligned inner products, with pairs of halfspaces which are well-separated, and hence have lower bounds on pairwise distances. 

Specifically, let $\{\pi_n\}_{n=1}^\infty$ satisfy  $\J(\pi_n) \to \inf_{\Pi(\mu)} \J$ and $\J(\pi_n) \leq 2 \inf_{\Pi(\mu)} \J$. Since the cost is translation invariant in $y$, without loss of generality, we may assume that each element in this sequence has the property that for any $k \in 1 \dots m$ we have $\pi_n(\R^d\times H_k^+)=\pi_n(\R^d\times H_k^-)=1/2$ where $H_k^+:=\{y\in\R^m : y_k>0\}$ and $H_k^-:=\{y\in\R^m : y_k\leq 0\}$. We also write $H_{k,r}^+=\{y\in\R^m : y_k>r\}$. As before, take $K_\e \subset\R^d$ to be a compact set for which $\mu(K_\e^c)<\frac{\e}{4m}$, and let $\e < 1/2$. By the non-negativity of $c$, one has for any $k \in 1 \dots m$
\[
\J(\pi_n) \geq \iint_{(K_\e\times H_{k,r}^+ )\times (K_\e \times H_{k}^-)} c\,d\pi_n d\pi_n.
\]
By again using the bound $(\mathbf{A1})$, the monotonicity and unboundedness of $f_{K_\e}$, and the fact that $(y,y')\in H_{k,r}^+\times H_{k}^-\implies |y-y'|^2>r^2$, then gives, for $r$ sufficiently large,
\[
\frac{\J(\pi_n)}{f_{K_\e}(r^2)} \geq \pi_n(K_\e \times H_{k,r}^+) \pi_n(K_\e \times H_k^-) \geq \left(\nu_n(H_{k,r}^+) - \frac{\e}{2m}\right)\left(\frac{1}{2} - \frac{\e}{2m}\right)
\]
and in turn, rearranging, summing over $k$, and using the fact that $\e < 1/2$, we obtain
\[
\nu_n(\cup_{k=1}^m H_{k,r}^+) \leq m \frac{8 \inf_{\Pi(\mu)} \J}{f_{K_\e}(r^2)} + \frac{\e}{4}.
\]
By repeating the argument for the halfspaces where $y_k < -r$, we then obtain
\[
\nu_n( \{ |y|_\infty >r) \leq  \frac{16 m  \inf_{\Pi(\mu)} \J}{f_{K_\e}(r^2)} + \frac{\e}{2},
\]
and by taking $r$ sufficiently large we can then bound $\nu_n( \{ |y|_\infty >r) \leq \e$. This proves tightness of the $\nu_n$, which in turn proves, up to a subsequence, existence of a weak limit $\pi$ which must be a minimizer.
\end{proof}

\section{The Marginal Problem}\label{sec:marginal}
As discussed in the introduction, many of the standard tools for existence of transportation maps in optimal transportation fail in the present context due to a lack of convexity in $\pi$ of the relaxed problem. In particular, the effects of replacing an embedding plan $\pi$ with $\pi+\dpi$ (such that $\pi+\dpi\in \Pi(\mu)$) are realized as first \emph{and} second-order terms in $\dpi$. More precisely, if $\dpi$ is a signed measure on $\X\times \Y$ such that for all $d$-dimensional Borel sets $A$,  $\dpi(A\times \R^m)=0$, one has \[
 \mu(A)=\pi(A\times \R^m)=[\pi+\dpi](A\times \R^m),
 \]so that adding $\dpi$ leaves the $\X$-marginal invariant. With this notation along with the symmetry assumption in $\mathbf{(A2)}$, one can succinctly express the change in energy due to the perturbation $\dpi$:\begin{equation}\label{quadratic_perturbation}\J(\pi+\dpi)-\J(\pi)=2\underbrace{\iint c\,d\pi d\dpi}_{=:\J(\dpi |\pi)}+\underbrace{\iint c\,d \dpi d \dpi}_{=:\J(\dpi)},.
\end{equation}
Here, $\dpi\mapsto \J(\dpi |\pi)$ encapsulates the linear contribution while $\dpi \mapsto \J(\dpi)$ represents the quadratic contribution. Further developing this notation, we remark that the $\J(\dpi |\pi)$ encodes the fact that the first-order affect should be thought of as a linear programming problem over $(x,y)\mapsto \int c(x,x',y,y')\pi(dx'dy')$ for a fixed embedding plan $\pi$. Denoting this map as $J_\pi(y|x)$, we see that the first-order problem can be formally stated: for any fixed $\tilde \pi\in\Pp(\R^d\times \R^m)$, find $\pi$ such that\[
\pi\in\arg\min_{\Pi(\mu)} \int J_{\tilde \pi}(y|x) \pi(dx\:dy)
\]As we are free to vary the $\Y$-marginal of $\pi$, the above formulation strongly suggests that if $\pi(dx\:dy)=\nu(dy|x)\mu(dx)$ is optimal, then the support of $\nu(\cdot|x)$ is concentrated on the minimizers of $J_\pi(\cdot|x)$. This turns out to indeed be the case, but before validating the claim, we give a definition to streamline the proceeding discussion.
\begin{defn}
\label{defn:marginal_problem}
Given a continuous cost $c$ of type $(\mathbf{IP})$ or $(\mathbf{N}^2)$ which satisfies assumptions $(\mathbf{A1}) \& \mathbf{(A2)}$ and a embedding plan $\pi\in\Pi(\mu)$, we define the marginal problem of $\J(\pi)$ by the function \begin{equation}
\label{MP}
    J_\pi(y|x):=\int  c(x,x',y,y')\pi(dx'dy').
\end{equation}
Furthermore, for the set valued map $\lambda: x\mapsto \arg\min J_\pi(\cdot|x)$, we call the set of all pairs $(x,\lambda(x))$ the minimal graph of $J_\pi$ and denote it with the symbol $\Lambda_\pi$.
\end{defn}
\noindent Notice that the chosen convention is that calligraphic letters are reserved to functional problems while standard capital letters denote functions on finite dimensional spaces. We also remark that when $c$ is continuous, With this definition in place, we now present the following theorem.
\begin{theorem}[Marginal Minimization]
\label{marginal_problem}
Suppose that $c$ is a continuous cost of type $(\mathbf{IP})$ or $(\mathbf{N}^2)$ and satisfies assumptions $(\mathbf{A1}) \& \mathbf{(A2)}$. If $\pi\in\Pi(\mu)$ is a minimizer of \eqref{DR}, then the support of $\pi$ is concentrated on the minimal graph of $J_{\pi}$. In other words, $\pi$ must satisfy the implicit relation
 \begin{equation}\pi(\Lambda_\pi)=1.\end{equation} \end{theorem}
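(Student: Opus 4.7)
I would argue by contradiction, using a needle-type perturbation that exploits the linear component of the variational expansion \eqref{quadratic_perturbation}. Suppose $\pi$ is a minimizer of \eqref{DR} but that $\pi(\Lambda_\pi) < 1$. The goal is to build a finite signed measure $\eta$ on $\X \times \Y$ with vanishing $\X$-marginal for which $\iint c\,d\pi\,d\eta < 0$ and $|\iint c\,d\eta\,d\eta| < \infty$, and such that $\pi + \e\eta \in \Pi(\mu)$ for all sufficiently small $\e > 0$. Then
\[
\J(\pi + \e\eta) - \J(\pi) = 2\e \iint c\,d\pi\,d\eta + \e^2 \iint c\,d\eta\,d\eta
\]
becomes strictly negative for small enough $\e$, contradicting the minimality of $\pi$.

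The concrete construction proceeds in three steps. First, extract a quantitative violation of the conclusion: since $\pi(\Lambda_\pi) < 1$, by intersecting with large balls and using $\sigma$-finiteness there exist $\delta, R > 0$ and a measurable set $B \subset B_R(0) \times B_R(0) \subset \X \times \Y$ with $\pi(B) > 0$ on which $J_\pi(y|x) > \inf_{y' \in B_R(0)} J_\pi(y'|x) + \delta$. The Kuratowski--Ryll-Nardzewski measurable selection theorem then yields a measurable map $y^* : \mathrm{proj}_\X(B) \to B_R(0)$ with $J_\pi(y^*(x)|x) < \inf_{y' \in B_R(0)} J_\pi(y'|x) + \delta/2$; this uses that $y \mapsto J_\pi(y|x)$ is continuous, which follows from continuity of $c$ via a dominated convergence argument together with finiteness of $\J(\pi)$. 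Second, letting $\pi_B$ denote the restriction of $\pi$ to $B$ and $\sigma := \mathrm{proj}_\X \sharp\, \pi_B$, define
\[
\eta := -\pi_B + (\mathrm{id}, y^*)_\sharp \sigma,
\]
where $(\mathrm{id},y^*): x \mapsto (x, y^*(x))$. By construction $\eta$ has vanishing $\X$-marginal, is supported in $B_R(0) \times B_R(0)$ with total variation at most $2\pi(B)$, and $\pi + \e\eta \in \Pi(\mu)$ for $\e \in [0,1]$ since its negative part $\e\pi_B$ is dominated by $\pi$. Third, a direct computation yields
\[
\iint c\,d\pi\,d\eta = -\int_B J_\pi(y|x)\,\pi(dx\,dy) + \int J_\pi(y^*(x)|x)\,\sigma(dx) \;<\; -\tfrac{\delta}{2}\,\pi(B),
\]
while continuity of $c$ on the compact support of $\eta$ gives $|\iint c\,d\eta\,d\eta| \leq C(R)\|\eta\|_{TV}^2 < \infty$. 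Choosing $\e$ sufficiently small then produces the contradiction.

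The main obstacle lies in step one: securing a measurable, bounded, approximate minimizer $y^*$ on a set of positive $\pi$-measure. This requires verifying enough regularity of the marginal functional $J_\pi(\cdot|x)$ — in particular continuity in $y$ and joint measurability — and isolating a bounded region where the deviation from $\arg\min J_\pi(\cdot|x)$ is genuinely quantitative. Restriction to bounded sets is simultaneously what guarantees finiteness of the quadratic correction $\iint c\,d\eta\,d\eta$, so the two difficulties are coupled and must be handled together during the truncation.
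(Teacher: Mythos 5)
Your proposal is correct and takes essentially the same route as the paper: a needle-type perturbation that moves $\pi$-mass from a positive-measure set of bad points $(x,y)$ (where the marginal value exceeds the minimum by a definite gap, inside a bounded region) onto a measurable selection of (approximate) marginal minimizers, and then compares the strictly negative linear contribution against the $O(\e^2)$ quadratic correction, which is finite by boundedness of $c$ on the compact support of the perturbation. The only cosmetic deviations are that you work with a $\delta/2$-approximate minimizer selected directly in $B_R(0)$ via Kuratowski--Ryll-Nardzewski whereas the paper selects an exact minimizer via Rockafellar's theorem, and you scale the perturbation by a separate parameter $\e$ rather than baking the scale into the signed measure — neither changes the substance.
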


From a high level, the theorem tells us that the variational problem \eqref{DR} may be transformed into a finite dimensional one; that of minimizing $J_\pi(\cdot |x)$ for every given $x$ (which implicitly depends on $\pi$). This is analogous to the situation in optimal control wherein a value function is found by solving a PDE which implicitly depends on the control $u$. Once this value function is found, one may pointwise minimize a (finite dimensional) Hamiltonian to find the optimal control. 

Continuing the analogy with control, notice that in the absence of a convexity assumption on $c$, smoothly varying $\pi$ is likely prone to get `stuck' in local minima. To this end, the proof of the theorem uses localized perturbations in $\X$ which transport probability mass in $\Y$ across potentially large distances. These perturbations are analogous to needle variations used in the proof of the Pontryagin Maximum Principle.

We now illustrate the proof idea in the discrete case. To this end, suppose $\mu=(1/n)\sum_i \delta_{x_i}$ and $\pi=(1/n)\sum_{ij}\pi_{ij}\delta_{(x_i,y_j)}$ where $y_1,y_2,...,y_n \in\R^m$ constitute the optimal solution to \eqref{DR}; each $\pi_{ij}$ tells what proportion of the $1/n$ mass at point $x_i$ will go to location $y_j$. Suppose that $y_j\not\in \lambda(x_i)$ for some pair $(x_i,y_j)$ with $\pi_{ij}>0$. Define a perturbation $\dpi$ which sends $y_j$ to $\tilde y_j\in\lambda(x_i)$, that is \[
\dpi= \frac{\pi_{ij}}{n}\Big(\delta_{(x_i,\tilde y_j)}-\delta_{(x_i,y_j)}\Big)
\]
and let $\tpi = \pi+\dpi.$ Computing first the effect on the linear term, $\J(\dpi|\pi)$ we have \begin{align*}
\J(\dpi|\pi)&=\frac{1}{n}\Big[ J_\pi(\tilde y_j |x_i)-J_\pi(y_j|x_i)\Big]<0,
\end{align*}by marginal minimality of $\tilde y_j$. Further, we have\[\dpi\otimes \dpi=(1/n^2)\Big(\delta_{(x_i,\tilde y_j)}\otimes \delta_{(x_i,\tilde y_j)}-\delta_{(x_i,\tilde y_j)}\otimes \delta_{(x_i, y_j)}-\delta_{(x_i, y_j)}\otimes \delta_{(x_i,\tilde y_j)}+\delta_{(x_i, y_j)}\otimes \delta_{(x_i, y_j)}\Big),\]and hence $\J(\dpi)=1/n^2\big[c(x_i,x_i,y_j, y_j)-2c(x_i,x_i,y_j,\tilde y_j)+c(x_i,x_i,\tilde y_j,\tilde y_j)\big]$ which is clearly dominated by the linear term when $n$ is large enough. Thus by Equation \eqref{quadratic_perturbation}, $\J(\pi+\dpi)<\J(\pi)$ and we obtain a contradiction to the optimality of $\pi$. Extending this idea to the continuum case only requires a direct, measure-theoretic argument.

 \begin{proof}
   Let $\pi$ be an optimal solution of \eqref{DR} and suppose for sake of contradiction that  $\pi(\Lambda_\pi^c)>0$. By defining \[A_{k,r}=\left\{(x,y):k^{-1} <J_\pi(y|x)- \min J_\pi(\cdot|x)\right\}\cap \{(x,y):|x|,|y|<r\},\]it follows that $\Lambda_\pi^c=\bigcup_{k,r=1}^{\infty} A_{k,r}$ and consequentially, $\pi(A_{k,r})>0$ for some $(k,r)\in\mathbb{N}^2$. Define the measure $\pi_{k,r}=\frac{\pi_{|A_{k,r}}}{\pi(A_{k,r})}$ and take $\tilde \lambda$ as a measurable selection of $\lambda$. This selection exists by the continuity of the marginal problem, $J_\pi$, which follows by the continuity of $c$\footnote{The existence of a minimizing measurable selection of $J_\pi$ follows from a theorem of Rockafeller (see 14.37 in \cite{rockafellar2009variational}) as soon as $J_\pi$ is a Carathéodory function.}. Choose $\varepsilon < \min\{2\pi(A_{k,r}),(k \|c\|_{L^\infty(A_{k,r}\times A_{k,r})})^{-1}\}$  to construct the perturbation \[
 \dpi = \frac{\varepsilon}{2} \bigg (\frac{\nu(A_{k,r}|x)}{\pi(A_{k,r})}\cdot\delta_{\tilde \lambda(x)}\otimes \mu -\pi_{k,r}\bigg).
 \]
where we have used the representation $\pi(A_{k,r})=\int \nu(A_{k,r}|x)\mu(dx)$. By the first restriction on $\varepsilon$, it follows that $\pi+\dpi$ is a positive measure. Furthermore we can see that this perturbation does not affect the input marginal, that is $\dpi(P\times \R^m)=0$ for all $P\in \mathcal{B}(\R^d)$.

Tracking the effects of this perturbation, the linear term becomes:
\begin{align*}
\J(\dpi|\pi)&=\frac{\varepsilon}{2\pi(A_{k,r})}\int J_\pi(y|x)\delta_{\tilde \lambda(x)}(dy) \nu(A_{k,r}|x)\mu(dx)-\frac{\varepsilon}{2}\int J_\pi(y|x)\pi_{k,r}(dx\:dy)\\
&<\frac{\varepsilon}{2\pi(A_{k,r})}\int \min J_\pi(\cdot |x)\nu(A_{k,r}|x)\mu(dx)-\frac{\varepsilon}{2}\int \big(\min J_\pi(\cdot |x) +k^{-1} \big)\frac{\nu(A_{k,r}|x)\mu(dx)}{\pi(A_{k,r})}\\
&=-\frac{\varepsilon}{2k}
\end{align*}where on the second to last line we make use of the lack of dependence on $y$ in the latter integrand. As $c$ is nonnegative, we have the following estimate for the quadratic term:
\begin{align*}
\J(\dpi)&\leq \frac{\varepsilon^2}{4}\iint c(x,x',\lambda(x),\lambda(x'))\nu(A_{k,r}|x)\mu(dx)\nu(A_{k,r}|x')\mu(dx')\\
&+\frac{\varepsilon^2}{4}\iint c(x,x',y,y')\pi_{k,r}(dx\:dy)\pi_{k,r}(dx'dy')\\
&\leq  \varepsilon^2 \cdot \|c\|_{L^\infty(A_{k,r}\times A_{k,r})}.
\end{align*}Putting the estimates together with \eqref{quadratic_perturbation}, one has\[
  \J(\pi+\dpi)-\J(\pi)<-\frac{\varepsilon}{k}+ \varepsilon^2 \cdot \|c\|_{L^\infty(A_{k,r}\times A_{k,r})}
\]which is negative by our choice of $\varepsilon$. This is a contradiction to optimality.
 \end{proof}

\begin{remark}\label{power_method}
  In the proof presented above we notice that transporting $\e$-mass to (global) marginal minimizers incurs a gain on the embedding cost regardless of whether or not $\pi$ is optimal. This is quite different in philosophy from the standard computational approaches which conduct particle-wise gradient descent in $\Y$. As evidenced by Example \ref{exmp:numerical}, particle-wise decent potentially gets caught in local minima of the marginal problem. These local minima can lead to highly oscillatory embeddings: in the language of this work this corresponds to probabilistic couplings.

  A different way of casting this observation is that if we are only allowed to perturb a coupling $\pi$ smoothly in $y$ then there may be local minimizers of $\J$ which are probabilistic in $\Y$. However we shall see in Section \ref{sec:Normed_costs} that probabilistic couplings are never optimal in our dimension reduction problems. This suggests the need for improved computational algorithms which are capable of executing perturbations which are not smooth in $\Y$.

\end{remark}

\subsection{Critical point equation}
In light of Theorem \ref{marginal_problem}, it is natural to consider the necessary conditions for optimality in $y$ of the marginal problem, and the constraints that they impose upon the optimal solution $\pi$. To begin, we consider assumptions under which the marginal problem, which depends implicitly upon the measure $\pi$, is differentiable.

\begin{lemma}
  Let the cost function $c$ be of type $(\mathbf{IP})$ or $(\mathbf{N}^2)$ and satisfy assumptions $(\mathbf{A1}), \mathbf{(A2)}, $ and $(\mathbf{A4})$. Let $\pi$ be a minimizer of \eqref{DR}. Then the funtion $J_\pi$ is $C^2$ in $x,y$.
  \label{lem:marginal-problem-smooth}
\end{lemma}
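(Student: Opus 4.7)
The strategy is standard differentiation under the integral sign applied to $J_\pi(y|x)=\int c(x,x',y,y')\,\pi(dx'dy')$. Since $(\mathbf{A4})$ already gives pointwise $C^2$ regularity of the integrand in $(x,y)$, it suffices to produce, for every compact neighborhood $U\subset\X\times\Y$ in the $(x,y)$ variables, a $\pi$-integrable dominating function $g_U(x',y')$ such that
\[
\sup_{(x,y)\in U}\bigl[\,c(x,x',y,y')+|D_{xy}c(x,x',y,y')|+|D^2_{xy}c(x,x',y,y')|\,\bigr]\le g_U(x',y').
\]
Once such a $g_U$ is in hand, dominated convergence allows us to take both derivatives inside the integral and conclude $J_\pi\in C^2$ locally, hence globally.

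To construct $g_U$, I would first anchor the estimate at a point where $J_\pi$ is finite. Because $\pi$ minimizes $\J$ and $\J(\pi)=\int J_\pi\,d\pi<\infty$, there is a set of full $\pi$-measure of base points $(x_0,y_0)$ with $J_\pi(y_0|x_0)<\infty$, i.e.\ $(x',y')\mapsto c(x_0,x',y_0,y')$ is $\pi$-integrable. Fixing such an anchor and writing $F(x,y):=c(x,x',y,y')$ (a nonnegative $C^2$ function of $(x,y)$ with $|D^2 F|\le C(1+F)$ by $(\mathbf{A4})$), I would establish a Gr\"onwall-type comparison: for any compact $U$ containing $(x_0,y_0)$ there exists $C_U$ independent of $(x',y')$ with
\[
\sup_{(x,y)\in U}\bigl[\,F+|D_{xy}F|+|D^2_{xy}F|\,\bigr](x,y)\le C_U\bigl(1+F(x_0,y_0)\bigr).
\]
This reduces to an ODE inequality along a line segment $\gamma:[0,1]\to U$ from $(x_0,y_0)$: setting $\Phi(t):=(F\circ\gamma)(t)+|(DF\circ\gamma)(t)|$, the hypothesis $|D^2 F|\le C(1+F)$ gives $\Phi'\le C'(1+\Phi)$ and hence $\Phi(1)\le e^{C'}(1+\Phi(0))$.

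Taking $g_U(x',y'):=C_U\bigl(1+c(x_0,x',y_0,y')\bigr)$ produces the required $\pi$-integrable dominator on $U$, justifying differentiation twice under the integral in a neighborhood of $(x_0,y_0)$. To cover an arbitrary point of $\X\times\Y$, I would join it to the anchor $(x_0,y_0)$ by a finite chain of overlapping compact sets and iterate the Gr\"onwall estimate along each link; this yields both finiteness of $J_\pi$ everywhere and the $C^2$ property at every $(x,y)$.

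The main obstacle is the Gr\"onwall-type estimate itself: the assumption $(\mathbf{A4})$ controls only second derivatives of $c$, so the first-order term $|DF(x_0,y_0)|$ that appears when initializing the ODE comparison must be recovered from $F$ and $\sup_U F$ alone. I would close this using the one-sided Taylor inequality $F(x_0+v,y_0+w)\ge F(x_0,y_0)+DF(x_0,y_0)\cdot(v,w)-\tfrac{C}{2}(1+\sup_U F)|(v,w)|^2$, in which the nonnegativity $F\ge 0$ (from $(\mathbf{A1})$) is essential to force $|DF(x_0,y_0)|\lesssim F(x_0,y_0)+(\sup_U F)$, followed by a standard absorption of $\sup_U F$ into the left-hand side of the Gr\"onwall bound.
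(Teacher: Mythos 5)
Your proof is correct and rests on the same underlying ingredients as the paper's: differentiation under the integral sign, the growth hypothesis $(\mathbf{A4})$, and dominated convergence. Where you go beyond the paper is in supplying the technical step it glosses over. The paper only observes that $\iint |D^2 c|\,d\pi\,d\pi \le C\bigl(1+\J(\pi)\bigr) < \infty$, which gives $\pi$-a.e.\ finiteness of $\int |D^2 c(x,x',y,y')|\,\pi(dx'dy')$, and then invokes ``a dominated convergence argument'' without producing an $(x,y)$-uniform dominator on a neighborhood --- which is what the differentiation theorem actually requires. Your Gr\"onwall-type propagation along line segments, together with a one-sided Taylor estimate to recover $|DF(x_0,y_0)|$ from $F$-values alone, produces exactly the missing $\pi$-integrable dominator $g_U(x',y') = C_U\bigl(1+c(x_0,x',y_0,y')\bigr)$, and the chaining step then carries both finiteness of $J_\pi$ and the $C^2$ regularity to all of $\X\times\Y$. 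One sign to fix: to extract $|DF(x_0,y_0)|$ from $F\ge 0$ you want the \emph{upper} Taylor bound $F(z_0+h) \le F(z_0) + DF(z_0)\cdot h + \tfrac{C}{2}(1+\sup_U F)|h|^2$ paired with $F(z_0+h)\ge 0$, not the lower bound you wrote; after that, the absorption of $\sup_U F$ on a sufficiently small ball (so the Gr\"onwall factor satisfies $B<1$) followed by chaining goes through as you describe.
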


\begin{proof}
Formally differentiating we should have the formula
  \begin{displaymath}
    D^2 J_\pi(y|x) = \int D^2c(x,x',y,y') \pi(dx'dy').
  \end{displaymath}
  However, by $(\mathbf{A4})$, we can write
  \[
    \iint |D^2 c| d\pi d\pi \leq C(1 + \mathcal{J}(\pi)) < \infty.
  \]
  This in turn implies that $\int D^2 c(x,x',y,y') \pi(dx'dy')$ is integrable (with respect to $\pi$), in $x,y$. A dominated convergence argument, along with continuity of the derivatives, then gives that $J_\pi$ is $C^2$ in $x,y$.
\end{proof}

We notice that a necessary condition for optimality is that $\spt \pi$ must be concentrated on solutions to the nonlinear integral equation in $\X\times \Y$\begin{equation}\label{gen_CPE}
D_y J_\pi(y|x) = \int D_y c(x,x',y,y')\pi(dx'dy')=0.
\end{equation}As the goal is to establish that $y$ is deterministically given by $x$, if $y\mapsto D_y J_\pi(y|x)$ were injective then for every given $x$ the unique solution to $D_y J_\pi(y|x)=0$ would specify $y$. However, we do not expect this to be the case in general (see Example \ref{qMDS_MP}). This stands in contrast to the situation in optimal transportation wherein $D_x c(x,\cdot)$ is assumed injective (sometimes called the \emph{twist condition}) rendering the equation $D \psi(x)+D_x c(x,y)=0$ to be a prescription of $y$ given $x$. Notice how the presence of the Kantorivich potential $\psi$ somehow encodes the additional marginal constraint which is present in OT; in the absence of this constraint in the dimension reduction problem, it is unsurprising there is no analogous term in \eqref{gen_CPE}.

In special cases, it can happen that the marginal problem is strictly convex as a function of $y$. We begin with a simple example in the context of classical dimension reduction algorithms.
\begin{exmp}
\label{PCA_via_MP}
Let $c(x,x',y,y')=\big(\langle x,x'\rangle -\langle y,y'\rangle \big)^2$. Then the marginal problem takes the form\[
J_\pi(y|x)=x^T\left[\int x'{x'}^T\mu(dx')\right]x-2x^T\left[\int x'{y'}^T\pi(dx'dy')\right]y+y^T\left[\int y'{y'}^T\nu(dy')\right]y.
\]Clearly, $y\mapsto J_\pi(y|x)$ is convex and thus $D_y J_\pi(y|x)=0$ will determine $y$ given $x$. Writing the critical point equation, we see\begin{equation}\label{IP_CPE}
\left[\int y'{x'}^T\pi(dx'dy')\right]x=\left[\int y'{y'}^T\nu(dy')\right]y,
\end{equation}indicating that the optimal map is linear, meaning $y = Ax$. If we utilize the singular value decomposition $A = U \Sigma V^T$, we can rewrite the original optimization problem as
\[
  \J(\pi) = \J(A) = \iint (x^T x' - x^T V \Sigma^T \Sigma V^T x')^2 \mu(dx) \mu(dx') = \iint (x^TV^T(I - \Sigma^T \Sigma) Vx')^2 \mu(dx) \mu(dx').
\]
This is equivalent, for centered $\mu$, to principal component analysis. \end{exmp}

Building upon this example, we can give the following simple corollary to Theorem \ref{marginal_problem}.


\begin{corollary}
  Suppose that $c$ is of type $\mathbf{(IP)}$ and satisfies Assumptions $\mathbf{(A0)}$-$\mathbf{(A2)}$ and $\mathbf{(A4)}$. Then any optimal solution of \eqref{DR} is supported on the graph of a function, whose smoothness is controlled by the differentiability of $t\mapsto \tilde c(x,x',t)$.
  \label{corollary:IP_mins}
\end{corollary}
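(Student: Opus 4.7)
The plan is to show that any optimal $\pi$ has the property that its conditional distribution $\nu(\cdot\mid x)$, in the disintegration $\pi(dx\,dy)=\nu(dy\mid x)\,\mu(dx)$, is a Dirac mass for $\mu$-a.e.\ $x$. My main tool is a comparison of $\pi$ with the \emph{barycenter embedding plan} $\tilde\pi := (\mathrm{Id},T)_\sharp \mu$, where $T(x):=\int y\,\nu(dy\mid x)$; strict convexity from $(\mathbf{A0})$ will force equality in a Jensen-type estimate, which then rigidifies the support of each $\nu(\cdot\mid x)$.

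First I would estimate $\J(\tilde\pi)$ from above by Jensen's inequality. Bilinearity of the inner product together with convexity of $t\mapsto\tilde c(x,x',t)$ gives
$$\tilde c\bigl(x,x',\langle T(x),T(x')\rangle\bigr)=\tilde c\Bigl(x,x',\iint\langle y,y'\rangle\,\nu(dy\mid x)\,\nu(dy'\mid x')\Bigr)\leq \iint \tilde c\bigl(x,x',\langle y,y'\rangle\bigr)\,\nu(dy\mid x)\,\nu(dy'\mid x').$$
Integrating against $\mu\otimes\mu$ yields $\J(\tilde\pi)\leq \J(\pi)$. Since $\pi$ is assumed optimal, equality must hold, and strict convexity from $(\mathbf{A0})$ then forces the pointwise Jensen step to be tight for $\mu\otimes\mu$-a.e.\ $(x,x')$; equivalently, the scalar random variable $(y,y')\mapsto \langle y,y'\rangle$ is $\nu(\cdot\mid x)\otimes\nu(\cdot\mid x')$-a.s.\ constant. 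By continuity of the inner product, this constancy extends to all of $\spt\nu(\cdot\mid x)\times\spt\nu(\cdot\mid x')$. Specializing to $x'=x$ and applying this to arbitrary $y_1,y_2\in\spt\nu(\cdot\mid x)$ yields $\langle y_1,y_1\rangle=\langle y_1,y_2\rangle=\langle y_2,y_2\rangle$, hence $|y_1-y_2|^2=0$ and $y_1=y_2$. Therefore $\nu(\cdot\mid x)=\delta_{T(x)}$, proving that $\pi$ is concentrated on the graph of the map $T$.

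For the smoothness claim, I would view $T$ as an implicit function defined by the critical point equation $D_y J_\pi(T(x)\mid x)=0$ furnished by Theorem~\ref{marginal_problem}. Lemma~\ref{lem:marginal-problem-smooth}, together with $(\mathbf{A4})$, ensures that $D_y J_\pi$ is $C^1$ in $(x,y)$ and inherits higher regularity from $\tilde c$. The Hessian $v^\top D_{yy}^2 J_\pi v=\int \partial_{tt}^2\tilde c(x,x',\langle T(x),y'\rangle)\,(v\cdot y')^2\,\pi(dx'dy')$ vanishes only on directions $v$ orthogonal to $\spt(\nu)$; since $T$ takes values in $\mathrm{span}\,\spt(\nu)$, a standard implicit function theorem argument, applied in that essential subspace, transfers regularity of $t\mapsto\tilde c(x,x',t)$ to $T$.

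I expect the main technical obstacle to be the rigidity step: converting the integrated equality $\J(\tilde\pi)=\J(\pi)$ into a genuinely pointwise statement about $\spt\nu(\cdot\mid x)$. This requires first extracting from Jensen's equality case the $\nu\otimes\nu$-almost sure constancy of $\langle y,y'\rangle$, then upgrading to equality on the entire product support via continuity, and finally specializing to the diagonal $x'=x$ in a measurable way; strict convexity of $\tilde c$ in its third argument is used in an essential way here. A secondary mild subtlety arises in the smoothness statement when $\partial^2_{tt}\tilde c$ degenerates or $\spt(\nu)$ is a proper subspace of $\R^m$, which is why the statement is phrased as smoothness ``controlled by'' rather than equal to that of $\tilde c$.
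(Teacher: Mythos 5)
Your route via the barycenter competitor $\tilde\pi=(\mathrm{Id},T)_\sharp\mu$ and the equality case of Jensen's inequality is genuinely different from the paper's. The paper instead invokes Theorem~\ref{marginal_problem} together with the observation that, under $(\mathbf{IP})$ and $(\mathbf{A0})$, the marginal problem $J_\pi(\cdot\mid x)$ is strictly convex on $\mathrm{span}(\spt\nu)$ (because $y\mapsto\langle y,y'\rangle$ is linear and $t\mapsto\tilde c(x,x',t)$ is strictly convex), hence has a unique minimizer for a.e.\ $x$; after reducing to the subspace $\mathrm{span}(\spt\nu)$ this immediately identifies $\pi$ with the graph of $x\mapsto\arg\min J_\pi(\cdot\mid x)$. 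Your argument avoids invoking Theorem~\ref{marginal_problem} for the map claim, at the cost of constructing a global competitor and of verifying that the barycenter $T(x)=\int y\,\nu(dy\mid x)$ is a well-defined, measurable element of $\Pi(\mu)$ of finite energy, which is an extra (if mild) step.

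There is, however, a genuine gap in your rigidity step. Equality $\J(\tilde\pi)=\J(\pi)$ gives, for $\mu\otimes\mu$-a.e.\ pair $(x,x')$, constancy of $\langle y,y'\rangle$ on $\spt\nu(\cdot\mid x)\times\spt\nu(\cdot\mid x')$. For a non-atomic $\mu$ the diagonal $\{x'=x\}$ is $\mu\otimes\mu$-null, so you cannot ``specialize to $x'=x$'': that pair lies in the exceptional set, and no continuity of $x\mapsto\spt\nu(\cdot\mid x)$ is available to bridge across it. The fix is to use the a.e.\ statement transversally rather than on the diagonal. Fix a good $x$ and $y_1,y_2\in\spt\nu(\cdot\mid x)$; constancy in the first slot gives $\langle y_1-y_2,y'\rangle=0$ for all $y'\in\spt\nu(\cdot\mid x')$, for $\mu$-a.e.\ $x'$, hence $y_1-y_2\perp\mathrm{span}(\spt\nu)$. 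But $\spt\nu(\cdot\mid x)\subset\spt\nu$ for $\mu$-a.e.\ $x$, so also $y_1-y_2\in\mathrm{span}(\spt\nu)$, forcing $y_1=y_2$ and $\nu(\cdot\mid x)=\delta_{T(x)}$. This replaces your diagonal specialization and is essentially the same subspace reduction the paper performs. The smoothness part via the critical point equation and the implicit function theorem agrees with the paper's.
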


\begin{proof}
  We notice that the marginal problem \eqref{MP} is represented as an integral of $c$ integrated against a measure on $(x',y')$. Since $y\mapsto \langle y,y'\rangle$ is a linear function, and $t\mapsto c(x,x',t)$ is strictly convex, we have that the marginal problem is strictly convex on the smallest subspace supported by $\nu$. If that subspace is $\R^m$ then we immediately have that the marginal problem is strictly convex for almost every $x$, and hence has a unique minimizer for almost every $x$. In turn, the map $\lambda$ is actually a function, and $\pi$ is supported on the graph of that function. If the smallest subspace containing the support of $\nu$ is of dimension $k < m$, then by using rotational invariance we could instead consider the problem of embedding to $\R^k$, and the same argument as above gives that the solution must be induced by a map. Finally, the smoothness of the optimal map may be recovered by noting that solutions must solve the critical point equation $D_y J_\pi=0$ and using the implicit function theorem.

\end{proof}

This corollary resolves the necessity of optimal solutions to be mappings in many natural contexts, specifically costs which are convex in $\langle y, y' \rangle$. Such costs include classical multi-dimensional scaling and kernel principle component analysis. However, many of the standard costs used in dimension reduction are non-convex in $y$, and have marginal problems with more complicated structure in their minimizers. We return to our running example which demonstrates that the marginal problem can have multiple minimizers.

\begin{exmp}
\label{qMDS_MP}
In the case of $c(x,x',y,y')=\big(|x-x'|^2-|y-y'|^2\big)^2$, one has a rather explicit formula for the marginal problem:\begin{equation}\label{qMDS_CPE_cont}
J_\pi(y|x)=|y|^4-2y^T\psi_\pi(x) y-4\varphi_\pi(x)^T y+\zeta_\pi(x).
\end{equation}
The coefficients of this polynomial equation are implicitly defined by moments of the joint distribution, in particular: \begin{align*}
\psi_\pi(x)&=\mathrm{Id_m}|x|^2-\int\Big[ 2y'{y'}^T +(|y'|^2-|x'|^2)\mathrm{Id_m}\Big]\pi(dx'dy')\\
\varphi_\pi(x)&=2\underbrace{\left(\int y'{x'}^T\pi(dx'dy')\right)}_{=:\Phi_\pi}x+\int y'(|y'|^2-|x'|^2)\pi(dx'dy')\\
\zeta_\pi(x)&=|x|^4+4 x^T\left(\int x'{x'}^T\mu(dx')\right)x-2|x|^2\left(\int\big(|y'|^2-|x'|^2\big)\pi(dx'dy')\right)\\
&+4\left(\int \big(|y'|^2-|x'|^2\big){x'}^T\pi(dx'dy')\right)x+\int \big(|y'|^2-|x'|^2\big)^2\pi(dx'dy')
\end{align*}
where we have assumed the distribution in $\R^m$ has mean zero by using translation invariance. We notice that the matrices $\psi_\pi, \varphi_\pi,$ and $\zeta_\pi$, which are completely determined by moments of $\pi$, give a parametric representation for the marginal problem, just as $A$ did in the inner product case from the previous example. We believe that this parametric representation should be useful for many unsupervised learning tasks, as it will directly give properties such as statistical consistency and direct extrapolation. Furthermore, it should facilitate more efficient computational algorithms that work in parameter space: this is the subject of current work.

Let $\eta_1,...,\eta_m$ be an orthogonal basis for which \[
\sum_{j=1}^m \eta_j\eta_j^T=\int \Big[2yy^T+\big(|y|^2-|x|^2\big)\mathrm{Id}_m\Big]\pi(dx\:dy)
\]so that \begin{equation}\label{eqn:psi_decomp}\psi_\pi(x)=|x|^2\mathrm{Id_m}-\sum_{j=1}^m \eta_j\eta_j^T\end{equation}. For simplicity, assume that $|\eta_1|<|\eta_2|<\cdots <|\eta_m|$. Evaluating the marginal problem along the lines $r_i(t)=t \tfrac{\eta_i}{|\eta_i|}$ one finds\[
\frac{d}{dt}J_\pi(r_i(t)|x)=t^3-\big(|x|^2-|\eta_i|^2 \big) t-\frac{\varphi_\pi^T(x)\eta_i}{|\eta_i|},
\]which can have multiple solutions along $r_i(t)$ provided $|x|>|\eta_i|$.

This alone is not necessarily a problem under Theorem \ref{marginal_problem} in that the marginal problem may have several critical points, but as long as there is a unique global minimizer we may still guarantee existence of non-probabilistic solutions for dimension reduction problem. This said, consider the set $\{x\,|\,\varphi_\pi(x)=0\}$ where the critical point equation can be expressed as
\[
\Big(|y|^2-|x|^2\Big)y+\sum_{j=1}^m \eta_j\eta_j^Ty=0.
\]One may readily check that the solutions to the above equation are exhausted by $y=\pm \tfrac{\eta_j}{|\eta_j|}\sqrt{|x|^2-|\eta_j|^2}$ for $j=1,...,m$ and $y=0$. The previous observations imply that the former case is only possible when $|x|>|\eta_j|$ which makes the square root well defined.
Plugging in each of these critical points into the marginal problem, we find that\[
J_\pi\left(\pm\tfrac{\eta_j}{|\eta_j|}\sqrt{|x|^2-|\eta_j|^2}\,\Big|\,x\right)=-\big(|x|^2-|\eta_j|^2\big)^2+\zeta_\pi(x)\geq -\big(|x|^2-|\eta_i|^2\big)^2+\zeta_\pi(x).
\]where $i$ is the largest index for which $|x|\leq |\eta_{i+1}|$. Hence for $\{x\,|\, \varphi_\pi(x)=0,\,|\eta_{i}|<|x|\leq |\eta_{i+1}|\}$, there are two minimizers to the marginal problem: $\pm \tfrac{\eta_i}{|\eta_i|}\sqrt{|x|^2-|\eta_i|^2}$. The case devolves further if $|\eta_j|$ is repeated ($|\eta_1|<\cdots<|\eta_j|=\cdots =|\eta_{j+k-1}|<\cdots<|\eta_m|$) and $|x|\leq |\eta_{j+1}|$ where any $y$ on the $k$-sphere spanned by $\eta_j,...,\eta_{j+k-1}$ is a minimizer of $J_\pi(\cdot |x)$.
\end{exmp}
The previous example is meant to demonstrate how pathological the nature of the marginal minimization problem can be: \emph{for simple costs, the marginal minimizers may be comprised of entire sub-manifolds in $\R^m$ for a single $x$!} In the pursuit of deterministic minimizers (i.e. Monge-type maps), one approach might be to show that these multiple minimizers can only happen on a thin set (in the above example this corresponds to showing that $\varphi_\pi(x)$ is full rank $\mu(dx)$-a.e.), but due to the implicit dependence of the marginal problem on the embedding plan $\pi$, taking this route directly has proven particularly difficult. 

 Another notable consequence which can be observed from the marginal problem framework is that for normed costs, it will be likely that there will be jump discontinuities arising from an analogous phenomenon to that of Example \ref{exmp:double_well}. The following example shows that in the case of q-MDS, we can guarantee discontinuities in the optimal solution. We can expect the argument below to persist for any dimension reduction problem for which $\arg\min J_\pi(\cdot|x)$ has multiple values for some $x$, but this property is implicitly dependent on $\pi$ as well and thus challenging to verify in practice.
\begin{exmp}
\label{exmp:discontinuous}
Putting technicalities of the rank of $D_y J_\pi(y|x)$ aside for the moment,  Example \ref{qMDS_MP} in the previous section argues that when none of the lengths of $|\eta_j|$ are repeated, there are $m+1$ distinct regions for which the marginal problem is defined by a different solution. More precisely, for $A_i:=\{x\,|\,\varphi_\pi(x)=0, |\eta_i|<|x|\leq |\eta_{i+1}|\}$, we have a semi-explicit (governed by moments of the optimal solution) formula for the reduction map: $T(x)=\pm\tfrac{\eta_i}{|\eta_i|}\sqrt{|x|^2-|\eta_i|^2}$; when $x$ passes from $A_i$ to $A_{i+1}$ the optimal solution abruptly jumps from $\pm\tfrac{\eta_i}{|\eta_i|}\sqrt{|x|^2-|\eta_i|^2}$ to $\pm\tfrac{\eta_{i+1}}{|\eta_{i+1}|}\sqrt{|x|^2-|\eta_{i+1}|^2}$.

Beyond this, one also can observe that for any path $x+\varepsilon v$ with $\varphi_\pi(x)=0$ (and $v$ not in the nullspace of $\Phi_\pi$) the marginal minimizer has a jump discontinuity at $\e = 0$. The intuition here will come from Example \ref{exmp:double_well}. Indeed, the previous considerations have implied that there will be multiple minimizers when $\varphi(x)=0$, namely $\pm\tfrac{\eta_i}{|\eta_i|}\sqrt{|x|^2-|\eta_i|^2}$ where $i$ is the smallest index such that $|x|> |\eta_{i}|$. By plugging $\pm\tfrac{\eta_i}{|\eta_i|}\sqrt{|x|^2-|\eta_i|^2}$ into the marginal problem \eqref{qMDS_CPE_cont} at $x+\varepsilon v$. We see by \eqref{eqn:psi_decomp},
\begin{align*}
J_\pi\big(\pm\tfrac{\eta_i}{|\eta_i|}\sqrt{|x|^2-|\eta_i|^2}\,\big|\, x+\varepsilon v\big)&=\big(|x|^2-|\eta_i|^2\big)^2\\
&-2\tfrac{\eta_i^T}{|\eta_i|}\sqrt{|x|^2-|\eta_i|^2}\Big(|x+\varepsilon v|^2\mathrm{Id}_m-\sum_{j=1}^m\eta_j\eta_j^T\Big)\tfrac{\eta_i}{|\eta_i|}\sqrt{|x|^2-|\eta_i|^2}\\
&\pm4\varphi_\pi^T(x+\varepsilon v)\frac{\eta_i}{|\eta_i|} \sqrt{|x|^2-|\eta_i|^2}+\zeta_\pi(x+\varepsilon v)\\
&=\big(|x|^2-|\eta_i|^2\big)^2-2\Big(|x|^2-|\eta_i|^2\Big)\Big(|x+\e v|^2-|\eta_i|^2\Big)\\
&\pm8\varepsilon v^T\Phi_\pi^T \frac{\eta_i}{|\eta_i|} \sqrt{|x|^2-|\eta_i|^2}+\zeta_\pi(x+\varepsilon v)
\end{align*}
where on the last line we have used the fact that $\varphi_\pi(x+\varepsilon v)=2\varepsilon \Phi_\pi v $. Crucially, we see that in order for the above expression to be minimal, one needs to choose the sign of the order $\varepsilon$ term to be opposite that of $v^T\Phi_\pi^T\eta_i$. In particular, this shows that near a point $x$ for which $\varphi_\pi(x)=0$, the optimal map is\[
T(x+\varepsilon v)=-\mathrm{sign}(\varepsilon v^T\Phi_\pi^T\eta_i)\tfrac{\eta_i}{|\eta_i|}\sqrt{|x|^2-|\eta_i|^2}+\O(\varepsilon)
\]whose limit does not exist at $\varepsilon=0$.

 \end{exmp}
 
 Now having seen the possibility of multiple minimizers to the marginal problem and how it can cause discontinuities, we illustrate one more useful perspective in the context of dimension reduction. Being that dimension reduction schemes inherently discard information while representing data in the embedded space, there must be some partition of $\X$ such that each element of the partition may be represented by a single value in the embedding. More precisely, for the map outlined in Definition \ref{defn:marginal_problem}, the set $\{x:\lambda(x)=y\}$ represents all of the points in $\X$ which are optimally embedded to the vector $y$. While these sets can be arbitrary, we expect them to form $d-m$ dimensional manifolds. To illustrate this, we present one more example.
 \begin{exmp}
 \label{exmp:equivilance_classes}Let us consider a simple example where 1000 datapoints in $\R^2$ are such that 500 points are stacked at $(0,1)$ and the other 500 are stacked at $(0,-1)$. The optimal embedding for the q-MDS cost into $\R$ is clearly realized by projecting the 2 dimensional dataset onto the $y$-axis. This allows us to explicitly compute\[
 \psi_\pi(x_1,x_2)=x_1^2+x_2^2-2,\qquad\varphi_\pi(x_1,x_2)=2x_2
 \]thus the critical point equation can be written $y^3-(x_1^2+x_2^2-2)y=2x_2$. Imitating the previous computations, we first notice that when $|x|<\sqrt{2}$, $\psi(x_1,x_2)<0$. This implies that on disk of radius $\sqrt{2}$, the marginal problem \eqref{qMDS_CPE_cont} has a unique solution. Indeed for $x$ in the set $\{x:|x|<\sqrt{2}\}$,\[
 \frac{d^2}{dy^2}J_\pi(y|x)=12 y^2-4\psi_\pi(x)>0.
 \]Furthermore, if $|x|\geq \sqrt{2}$, there are be multiple minimizers along the set $\{x:\varphi_\pi(x)=0\}$ which will lead to a jump discontinuity as predicted in Example \ref{exmp:discontinuous}. The figure below illustrates the level sets of the minimizers, $\lambda_\pi:\R^2\to\R$.

 \begin{figure}[h]
    \centering

     \includegraphics[width=7cm]{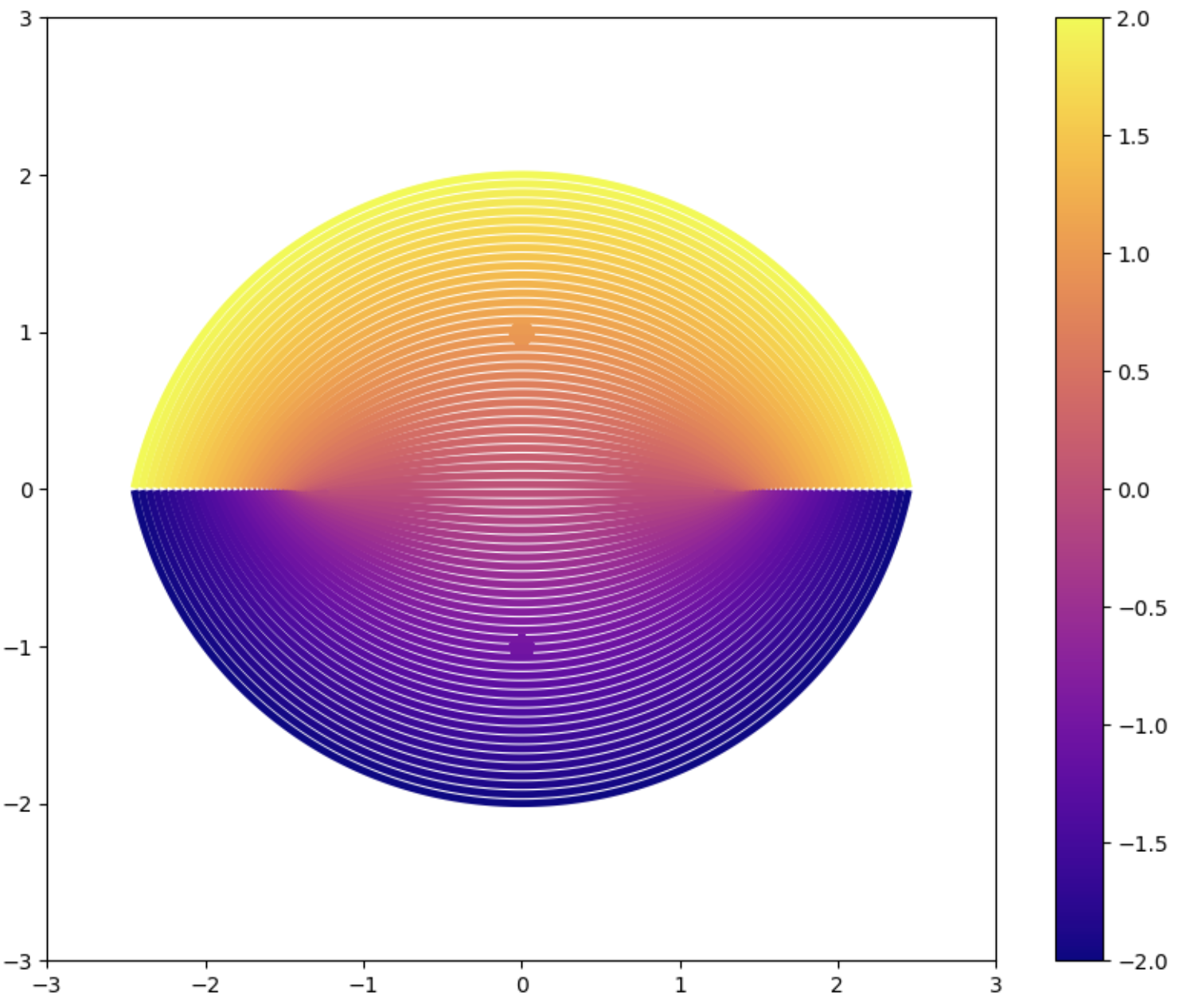}
     \caption{Each band represents an equivalence class of points in $\R^2$ which all have the same minimizer in $\R$ for the embedding outlined in Example \ref{exmp:equivilance_classes}. Notice that once $|x|>\sqrt{2}$, the line $x_2=0$ has a discontinuity surface.}
     \label{fig:equiv_classes}
\end{figure}
 \end{exmp}

\section{Normed costs: maps via second-order conditions}
\label{sec:Normed_costs}

In this section, we show that for a wide range of normed costs the solution of the dimension reduction problem \ref{DR} is induced by a map. The main difficulty is that the dimension reduction problem of type $\mathbf{(N}^2)$ is not marginally convex in $y$ (i.e. $y\mapsto J_\pi(y|x)$ is not convex) and thus we expect multiple minimizers to a given marginal problem (take for instance Example \ref{exmp:numerical}). This follows since $y\mapsto c(x,x',|y-y'|^2)$ need not be convex  \emph{even when the function} $t\mapsto c(x,x',t)$ \emph{is convex}. To surmount this, we track the second-order effect of perturbations, on the level of the dimension reduction plans. We shall see that certain natural structural conditions upon $c$ (namely assumptions $\mathbf{(A3)}$-$\mathbf{(A5)}$) will then be sufficient to guarantee that optimal plans are induced by maps. 

As in Section \ref{sec:marginal}, we motivate our proofs by first formally considering the case where the input distribution is realized as a sum of Dirac masses, $\mu=(1/n)\sum_{i=1}^n \delta_{x_i}$, for some collection of distinct points $x_1,x_2,...,x_n\in\R^d$. As in Section \ref{sec:marginal}, we assume that the optimal embedding may be represented discretely by $\pi =(1/n)\sum_{ij}\pi_{ij}\delta_{(x_i,y_j)}$ for some distinct collection of vectors $y_1,y_2,...,y_n\in\R^m$. Suppose that in the $i$th row of $\pi$ there are at least two nonzero entries and reorder the $y$'s so that $\pi_{ii}, \pi_{ij}>0$; this essentially encodes the situation where an optimal embedding maps a single $x$ to multiple $y$'s. 

By Theorem \ref{marginal_problem}, both $y_i,y_j\in\arg\min J_\pi(\cdot|x_i)$ and thus we can transport the mass stored at $(x_i,y_j)$ to $(x_i,y_i)$ without violating our first-order condition. More precisely, if $\pi_{ii} >0$ and $\pi_{ij}>0$, the perturbation\[
\dpi=\min\{\pi_{ii},\pi_{ij}\}\big[\delta_{(x_i,y_i)}-\delta_{(x_i,y_j)}\big],
\]is well-defined and will have $\J(\dpi|\pi)=0$, meaning that it will leave the energy unchanged up to second-order variations. When we compute the quadratic term, we have\[
\dpi\otimes \dpi = \delta_{(x_i,y_i)}\otimes \delta_{(x_i,y_i)}-\delta_{(x_i,y_i)}\otimes \delta_{(x_i,y_j)}-\delta_{(x_i,y_j)}\otimes \delta_{(x_i,y_i)}+\delta_{(x_i,y_j)}\delta_{(x_i,y_j)}
\]and thus\begin{align*}
\J(\dpi)&=c(x_i,x_i,|y_i-y_i|^2)-2c(x_i,x_i,|y_i-y_j|^2)+c(x_i,x_i,|y_j-y_j|^2)\\
&= 2\big[ c(x_i,x_i,0)- c(x_i,x_i,|y_i-y_j|^2)\big].
\end{align*}Crucially, if $t\mapsto \tilde c(x,x,t)$ has a strict global minimum for $t=0$ for all $x$, then $\J(\dpi)<0\iff y_i\ne y_j$. This implies that for each $x$ the optimal plan must be supported only on a single $y$. We remark that this argument works for any type of dissimilarity kernel. In spite of the technical difficulties engendered by the loss of lower semicontinuity, this discrete argument suggests a very strong result: that solutions to the relaxed problem \eqref{DR} with normed cost \emph{must be deterministic}, and hence must solve the original problem \eqref{eqn:MDS-general}.  This is quite surprising in light of the examples presented in the introduction suggesting that Young measures can be encountered in practice. This is because the perturbations used in particle-based optimization methods cannot carry out perturbations of the form $\dpi$, and can get stuck in local mins with respect to particle-wise descent.

To extend this argument to the continuum setting, one must be able to represent solutions to the marginal problem locally in a consistent manner. In particular, it would be ideal to obtain more structure on the nature of the marginally minimizing set-valued map $\lambda$ as outlined in Definition \ref{defn:marginal_problem}. Leaving technical justification aside for the moment, suppose that locally $\lambda$ admits a countable representation, i.e. $\lambda(x)=\bigcup_{i=1}^\infty \lambda_i(x)$ for a sequence of smooth functions $\lambda_i$. We then can leverage the discrete argument between pairs of these functions through the following proposition.

\begin{prop}\label{prop:paired-support}
  Let $\lambda_1,\lambda_2:B_\delta(x_0) \to Y$ be continuous functions with $\lambda_1(x) \neq \lambda_2(x)$ for all $x \in B_\delta(x_0)$. Assume that $c$ is a continuous cost of type $\mathbf{(N}^2)$ and satisfies assumptions $\mathbf{(A1)}$-$\mathbf{(A3)}$ and assume $\pi$ be a minimizer of \eqref{DR}. Let $\mu_1,\mu_2$ be the $\X$-marginal measures of $\pi$ restricted to the sets $y = \lambda_1(x)$ and $y=\lambda_2(x)$ and $x \in B_\delta(x_0)$. Then $\mu_1$ is mutually singular to $\mu_2$, or in symbols $\mu_1 \perp \mu_2$, meaning that they have disjoint supports.
\end{prop}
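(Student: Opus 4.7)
The plan is to argue by contradiction using a localized perturbation $\dpi$ that moves $\pi$-mass from the graph of $\lambda_2$ onto the graph of $\lambda_1$ over a small ball centered at some $x_*\in B_\delta(x_0)$. Theorem \ref{marginal_problem} will kill the first-order variation $\J(\dpi|\pi)$, while assumption $(\mathbf{A3})$ combined with continuity will force the second-order term $\J(\dpi)$ to be strictly negative. This is essentially the continuum version of the diagonal calculation $2[\tilde c(x,x,0) - \tilde c(x,x,|\lambda_1(x)-\lambda_2(x)|^2)] < 0$ that opens the section.

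More concretely, suppose for contradiction that $\mu_1 \not\perp \mu_2$. The Lebesgue decomposition of $\mu_2$ relative to $\mu_1$ produces a nonzero positive Borel measure $\sigma_0 \leq \mu_1 \wedge \mu_2$ on $B_\delta(x_0)$ (explicitly, one may take $\sigma_0 = (f \wedge 1)\,\mu_1$ with $f = d\mu_2^{\mathrm{ac}}/d\mu_1$). Pick $x_* \in \supp(\sigma_0)$, and for a radius $r>0$ to be chosen later set $\sigma := \sigma_0|_{B_r(x_*)}$, which is nonzero for every $r>0$. Define the signed measure
\[
    \dpi := (\mathrm{id},\lambda_1)_\sharp \sigma - (\mathrm{id},\lambda_2)_\sharp \sigma.
\]
Since $\sigma \leq \mu_2$ and the restriction of $\pi$ to $\mathrm{graph}(\lambda_2)$ has $\X$-marginal $\mu_2$, the negative part of $\dpi$ is dominated by $\pi$, so $\pi + \dpi \geq 0$. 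The two terms in $\dpi$ share $\X$-marginal $\sigma$, hence $\pi + \dpi \in \Pi(\mu)$ and is an admissible competitor.

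For the linear term, $\pi(\Lambda_\pi)=1$ by Theorem \ref{marginal_problem}, and this property passes to the restrictions of $\pi$ to the graphs of $\lambda_1,\lambda_2$; thus both $\lambda_1(x)$ and $\lambda_2(x)$ attain $\min J_\pi(\cdot|x)$ for $\sigma$-a.e.\ $x$, giving $\J(\dpi|\pi) = 0$. For the quadratic term, the $(\mathbf{N}^2)$ structure yields $\J(\dpi) = \iint_{B_r(x_*)^2}\Phi(x,x')\,\sigma(dx)\,\sigma(dx')$ with
\[
    \Phi(x,x') := \tilde c(x,x',|\lambda_1(x)-\lambda_1(x')|^2) - 2\tilde c(x,x',|\lambda_1(x)-\lambda_2(x')|^2) + \tilde c(x,x',|\lambda_2(x)-\lambda_2(x')|^2).
\]
At the diagonal, $\Phi(x_*,x_*) = 2[\tilde c(x_*,x_*,0) - \tilde c(x_*,x_*,|\lambda_1(x_*)-\lambda_2(x_*)|^2)]$, which is strictly negative by $(\mathbf{A3})$ together with the standing hypothesis $\lambda_1(x_*)\neq\lambda_2(x_*)$. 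By joint continuity of $\tilde c,\lambda_1,\lambda_2$, we may shrink $r$ so that $\Phi \leq -\eta < 0$ uniformly on $B_r(x_*)^2$, giving $\J(\dpi) \leq -\eta\,\sigma(B_r(x_*))^2 < 0$. Substitution into \eqref{quadratic_perturbation} yields $\J(\pi+\dpi) < \J(\pi)$, contradicting optimality. The main obstacle is really the bookkeeping needed to extract a simultaneously nonnegative and marginal-preserving $\dpi$ from the abstract non-singularity hypothesis; once this is done $(\mathbf{A3})$ does all the work in the second-order analysis, and the argument would break without it because the sign of the diagonal self-interaction is precisely what is being exploited.
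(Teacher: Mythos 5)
Your proof is correct and follows essentially the same route as the paper's: both arguments contradict optimality by constructing a localized perturbation that swaps mass from the graph of $\lambda_2$ to the graph of $\lambda_1$ over a small ball around a support point of $\mu_1\wedge\mu_2$, kill the first-order variation with Theorem \ref{marginal_problem}, and use $(\mathbf{A3})$ plus continuity to make the quadratic self-interaction term strictly negative. Your measure $\sigma_0 = (f\wedge 1)\,\mu_1$ is exactly $\mu_1\wedge\mu_2$ (the paper writes it as $\mu_1 - (\mu_1-\mu_2)^+$), and your choice of $x_*\in\supp\sigma_0$ plays the role of the paper's Lebesgue-type point $\bar x$; the remaining bookkeeping is identical.
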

\begin{proof}
  Suppose, for the sake of contradiction, that $\mu_1$ and $\mu_2$ are not mutually singular. Then the measure $\mu_1 \wedge \mu_2 = \mu_1 - (\mu_1-\mu_2)^+$ is not a zero measure, and we may select a point $\bar x \in B_\delta(x_0)$ so that $\mu_1 \wedge \mu_2(B_\e(\bar x)) >0$ for all $\e>0$ sufficiently small.

  We then construct the perturbation, restricted to $x \in B_\e(\bar x)$, via \[
  \dpi(dx\:dy)= \mu_1 \wedge \mu_2 (dx) \Big[\delta_{\lambda_{1}(x)}(dy)-\delta_{\lambda_2(x)}(dy)\Big].
\] By construction we have that $\pi + \dpi$ is a probability measure and retains the same $\X$ marginal as $\pi$. We also note that $\dpi$ is not the zero measure by choice of $\bar x$.

As long as $\mu_1,\mu_2$ are non-trivial, then by Theorem \ref{marginal_problem} we know that $\lambda_1,\lambda_2$ must be minimizers of the marginal problem on the support of $\mu_1,\mu_2$. Using the notation from the proof of Theorem \ref{marginal_problem} we have that
\[
  \J(\dpi|\pi)=\int J_\pi(y|x)\dpi(dx\:dy)=\int_{B_\varepsilon(\bar{x})} [\min J_\pi(\cdot |x)-\min J_\pi(\cdot |x)]\mu_1 \wedge \mu_2(dx)=0.
\]
The overall change in the quadratic term is given by, after using \textbf{(A2)},\begin{align*}
\J(\dpi)&=\iint_{B_\varepsilon(\bar x)\times B_\varepsilon(\bar x)} c(x,x',|\lambda_i(x)-\lambda_i(x')|^2)\mu_1 \wedge \mu_2(dx)\mu_1 \wedge \mu_2(dx')\\
&+\iint_{B_\varepsilon(\bar x)\times B_\varepsilon(\bar x)}  c(x,x',|\lambda_j(x)-\lambda_j(x')|^2)\mu_1 \wedge \mu_2(dx)\mu_1 \wedge \mu_2(dx')\\
&-2\iint_{B_\varepsilon(\bar x)\times B_\varepsilon(\bar x)}   c(x,x',|\lambda_i(x)-\lambda_j(x')|^2)\mu_1 \wedge \mu_2(dx)\mu_1 \wedge \mu_2(dx').
\end{align*}
By using the continuity of $c,\lambda_1,\lambda_2$, we then estimate
\[
  \J(\dpi) \leq 2 \mu_1 \wedge \mu_2(B_\e(\bar x))^2 (c(\bar x,\bar x,0) - c(\bar x,\bar x, |\lambda_1(\bar x)- \lambda_2(\bar x)|^2) + \eta(\e)),
\]
where $\eta$ represents a local modulus of continuity and satisfies $\eta(\e) \to 0$ as $\e \to 0$. As $\lambda_1(\bar x) \neq \lambda_2(\bar x)$, and $c(x,x,t)$ is strictly minimized at $t=0$ by \textbf{(A3)}, we obtain that $\J(\dpi) < 0$, which contradicts the minimality of $\pi$.
\end{proof}
\noindent An induction argument then gives the following immediate corollary.
\begin{corollary}
  Let $\lambda_i:O_i \to Y$ be continuous functions, where $O_i$ are open sets, and $i \in \{1, \dots, \infty\}$. Let $\pi$ be a minimizer of \ref{DR} for continuous cost satisfying $\mathbf{(N}^2)$ and $\mathbf{(A1)}$-$\mathbf{(A3)}$, and let $\tilde \pi$ be the restriction of $\pi$ to the union of the sets $\{(x,\lambda_i(x)) : x \in O_i\}$. Then $\tilde \pi$ has support on the graph of a function.
  \label{cor:countable-graph}
\end{corollary}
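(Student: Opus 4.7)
The plan is to prove the corollary by contradiction, applying Proposition \ref{prop:paired-support} pairwise and collapsing the countably many pieces via a pigeonhole argument.

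First, I would set up pairwise mutual singularity on the open disagreement set. For each pair $i \neq j$, let $E_{ij} := \{x \in O_i \cap O_j : \lambda_i(x) \neq \lambda_j(x)\}$; continuity of the $\lambda_\ell$'s makes $E_{ij}$ open. Since $\R^d$ is second countable, cover $E_{ij}$ by countably many open balls $B_k$ whose closures lie in $E_{ij}$, so that $\lambda_i \neq \lambda_j$ holds throughout each $B_k$. Proposition \ref{prop:paired-support} applies on every $B_k$ and yields mutual singularity of the $\X$-marginals $\mu_i^k, \mu_j^k$ of $\pi$ restricted to the two graphs over $B_k$. After disjointifying the cover into measurable pieces $A_k$ with $\bigcup_k A_k = E_{ij}$, the mutual singularity on each $A_k$ combines into the global statement $\mu_i|_{E_{ij}} \perp \mu_j|_{E_{ij}}$, where $\mu_\ell$ denotes the $\X$-marginal of $\tilde\pi$ restricted to $G_\ell := \{(x,\lambda_\ell(x)) : x \in O_\ell\}$.

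Second, I would disintegrate $\tilde\pi = \tilde\nu(dy|x)\tilde\mu(dx)$. Because $\mathrm{supp}(\tilde\pi) \subset \bigcup_i G_i$ is contained in a countable union of graphs, each conditional $\tilde\nu(\cdot|x)$ is supported on the countable set $\{\lambda_i(x) : x \in O_i\}$ and admits a representation $\tilde\nu(\cdot|x) = \sum_i p_i(x)\,\delta_{\lambda_i(x)}$ with measurable weights $p_i(x) \geq 0$. Direct computation via this disintegration gives $\mu_\ell(A) = \int_A p_\ell(x)\,\tilde\mu(dx)$ for $A \subset O_\ell$. The claim that $\tilde\pi$ is supported on a single graph is equivalent to asserting that, for $\tilde\mu$-a.e.\ $x$, at most one value of $\lambda_i(x)$ with $p_i(x) > 0$ appears.

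Third, to close the contradiction, suppose this fails: there is a set of positive $\tilde\mu$-measure on which at least two distinct values $\lambda_i(x), \lambda_j(x)$ both have positive weights. Since this set is a countable union over pairs $(i,j)$, countable pigeonhole produces a specific pair for which $B_{ij} := \{x \in E_{ij} : p_i(x) > 0,\, p_j(x) > 0\}$ satisfies $\tilde\mu(B_{ij}) > 0$. On $B_{ij}$ both $\mu_i$ and $\mu_j$ have strictly positive densities $p_i, p_j$ with respect to $\tilde\mu|_{B_{ij}}$, and are therefore mutually absolutely continuous with a common nonzero reference measure; this is incompatible with $\mu_i|_{E_{ij}} \perp \mu_j|_{E_{ij}}$ from the first step, forcing $\tilde\mu(B_{ij}) = 0$, the desired contradiction. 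The main obstacle I anticipate is the measure-theoretic bookkeeping: verifying that disintegration on the Polish spaces at hand produces Borel-measurable weights $p_i$ and that the pigeonhole step extracts a measurable set $B_{ij}$; both are routine but need some care to articulate cleanly.
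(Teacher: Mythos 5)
Your proof is correct, and since the paper does not actually supply an argument for this corollary (it merely asserts that ``an induction argument then gives'' the result), your write-up is a valid way to make the step rigorous. The heart of the argument is the same: apply Proposition~\ref{prop:paired-support} pairwise to obtain mutual singularity of the marginals carried by distinct branches $\lambda_i,\lambda_j$ on their disagreement set, and then patch together the countably many pairwise constraints. Your particular organization---cover the open disagreement set $E_{ij}$ by countably many balls where the hypotheses of Proposition~\ref{prop:paired-support} hold verbatim, disjointify, disintegrate $\tilde\pi$ into weighted Diracs $\sum_i p_i(x)\,\delta_{\lambda_i(x)}$, and then use countable pigeonhole to locate a pair $(i,j)$ with $\tilde\mu(B_{ij})>0$ on which $\mu_i$ and $\mu_j$ are both equivalent to $\tilde\mu$, contradicting singularity---is a clean substitute for an explicit induction on the number of branches, and it handles the passage from finitely many $\lambda_i$ to countably many without needing a separate limiting step. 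The only things worth flagging are the measurability points you already identified: the weights $p_i$ are Borel by the standard disintegration theorem on Polish spaces, and $B_{ij}=\{x\in E_{ij}:p_i(x)>0,\ p_j(x)>0\}$ is then Borel, so the pigeonhole over a countable union of measurable sets is sound. One small notational hazard: when $\lambda_i(x)=\lambda_j(x)$ for some $x$, the decomposition $\sum_i p_i(x)\delta_{\lambda_i(x)}$ has coincident atoms, so ``$p_i(x)>0$ and $p_j(x)>0$'' alone does not yield two distinct points in the support; you handle this correctly by intersecting with $E_{ij}$, where $\lambda_i\neq\lambda_j$, but it's worth making that dependence explicit when you say ``at least two distinct values both have positive weights.''
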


The previous proposition offers a direct application to global minimizers of the marginal problem which have non-degenerate Hessian in $y$; namely those minimizers which are also strict local minimizers. We begin by proving two brief lemmas based upon the implicit function theorem.

\begin{lemma}
  Let $\pi$ be a minimizer of \ref{DR} for cost satisfying $\mathbf{(N}^2)$ and $\mathbf{(A1)}$-$\mathbf{(A4)}$. Suppose that $y_1 \neq y_2$ are global minimizers of the marginal problem at $\bar x$, which both satisfy $D_{yy}^2 J_\pi(y_i | \bar x) > 0$. Then there exists a $\delta>0$ and $C^1$ functions $\lambda_i:B_\delta(\bar x) \to B_\delta(y_i)$, $i=1,2$ so that $\lambda_i(x)$ is the only strict local minimizer of the marginal problem in $B_\delta(y_i)$. 
  \label{lem:Implicit-solutions}
\end{lemma}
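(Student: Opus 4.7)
The plan is to apply the implicit function theorem to the critical point equation $D_y J_\pi(y|x) = 0$ at each of the points $(\bar x, y_1)$ and $(\bar x, y_2)$, and then invoke continuity of the Hessian to upgrade the critical points so produced to strict local minimizers. The regularity needed to run this argument is already in hand: by Lemma \ref{lem:marginal-problem-smooth}, assumption $(\mathbf{A4})$ guarantees that $J_\pi$ is $C^2$ jointly in $(x,y)$. Since $y_1$ and $y_2$ are global minimizers of $J_\pi(\cdot|\bar x)$ we have $D_y J_\pi(y_i|\bar x)=0$ for $i=1,2$, and by hypothesis the Hessians $D_{yy}^2 J_\pi(y_i|\bar x)$ are strictly positive definite, in particular invertible.

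First I would apply the implicit function theorem to the $C^1$ map $G_i(x,y) := D_y J_\pi(y|x)$ at $(\bar x, y_i)$, which is possible because $D_y G_i(\bar x, y_i)=D_{yy}^2 J_\pi(y_i|\bar x)$ is invertible. This yields a radius $\delta_i>0$ and a $C^1$ function $\lambda_i : B_{\delta_i}(\bar x) \to B_{\delta_i}(y_i)$ with $\lambda_i(\bar x)=y_i$ such that the equation $D_y J_\pi(y|x)=0$, for $(x,y)\in B_{\delta_i}(\bar x)\times B_{\delta_i}(y_i)$, holds if and only if $y=\lambda_i(x)$. In particular $\lambda_i(x)$ is the only critical point of $J_\pi(\cdot|x)$ inside $B_{\delta_i}(y_i)$, so a fortiori it is the only strict local minimizer there.

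Second, I would upgrade $\lambda_i(x)$ from merely being a critical point to being a strict local minimizer. Because $J_\pi \in C^2$, the map $(x,y) \mapsto D_{yy}^2 J_\pi(y|x)$ is continuous, and at $(\bar x, y_i)$ it takes a value in the open cone of positive definite matrices. By continuity, after possibly shrinking $\delta_i$, we may assume that $D_{yy}^2 J_\pi(\lambda_i(x)|x)>0$ for every $x\in B_{\delta_i}(\bar x)$, so that $\lambda_i(x)$ is automatically a strict local minimizer of $J_\pi(\cdot|x)$. Taking $\delta=\min\{\delta_1,\delta_2,\tfrac{1}{2}|y_1-y_2|\}$ ensures that the two neighborhoods $B_\delta(y_1)$ and $B_\delta(y_2)$ are disjoint and that both constructions are valid on the common domain $B_\delta(\bar x)$.

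There is no real obstacle here: the lemma is essentially a bookkeeping statement. The only point that requires any care is checking that the hypotheses of the implicit function theorem are met, which boils down to the $C^2$-regularity of $J_\pi$ furnished by Lemma \ref{lem:marginal-problem-smooth} together with the assumed positive definiteness of the Hessian at $(\bar x, y_i)$. All subsequent conclusions (uniqueness of the critical point in the neighborhood, and its being a strict local minimum) are then immediate consequences of the implicit function theorem and continuity, respectively.
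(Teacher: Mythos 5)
Your proof is correct and is essentially the same as the paper's: apply the implicit function theorem to $D_y J_\pi(y|x)=0$ at $(\bar x, y_i)$ (using the $C^2$ regularity from Lemma~\ref{lem:marginal-problem-smooth} and the invertibility of $D_{yy}^2 J_\pi(y_i|\bar x)$), then shrink $\delta$ by continuity of the Hessian to guarantee strict local minimality. You simply make explicit a few details the paper leaves implicit, such as taking $\delta \le \tfrac{1}{2}|y_1-y_2|$.
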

\begin{proof}
  The minimality of $y_1$ and $y_2$ indicate that both $D_y J_\pi(y_1|\bar x)=0$ and $D_y J_\pi(y_2|\bar x)=0$. From the strict non-degeneracy assumption on $D_y^2 J_\pi$, the implicit function theorem allows us to construct $C^1$ maps $\lambda_i:B_\delta(\bar x)\to B_\delta(y_i)$ $i=1,2$, which uniquely solve $D_y J_\pi(\lambda_i(x)|x)=0$ on the respective neighborhoods in the product space. We note that without loss of generality $\delta$ can be taken small enough to guarantee the strict local minimality of $\lambda_1$ and $\lambda_2$ since $J_\pi$ was $C^2$. 
\end{proof}

\begin{lemma}
  Assume that $c$ is a cost of type $\mathbf{(N}^2)$ and satisfies assumptions $\mathbf{(A1)}$-$\mathbf{(A4)}$ and let $\pi$ be a minimizer of \eqref{DR}. Then for every $x$ there exists at most a countable number of global minimizers of the marginal problem which satisfy $D_{yy}^2 J_\pi(y|x) > 0$.
  \label{lem:countable-mins}
\end{lemma}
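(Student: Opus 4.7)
The plan is to show that every point of the set in question is isolated in $\R^m$, and then conclude by separability of $\R^m$. The $C^2$ regularity of the marginal problem, which has already been established in Lemma \ref{lem:marginal-problem-smooth}, is the only analytic input needed; everything else is a standard topological observation.

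First I would fix $x$ and recall that by Lemma \ref{lem:marginal-problem-smooth} the function $y \mapsto J_\pi(y|x)$ is $C^2$. Let $y^\ast$ be any global minimizer with $D_{yy}^2 J_\pi(y^\ast|x) > 0$. The first-order condition $D_y J_\pi(y^\ast|x) = 0$ together with a second-order Taylor expansion gives constants $\delta(y^\ast) > 0$ and $\kappa(y^\ast) > 0$ such that
\[
J_\pi(y|x) - J_\pi(y^\ast|x) \geq \kappa(y^\ast)\,|y - y^\ast|^2 \qquad \text{for all } y \in B_{\delta(y^\ast)}(y^\ast).
\]
In particular no other critical point of $J_\pi(\cdot|x)$, and \emph{a fortiori} no other global minimizer, can lie in $B_{\delta(y^\ast)}(y^\ast)$. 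This is essentially the same observation that powers the construction of the local $C^1$ branches $\lambda_i$ in Lemma \ref{lem:Implicit-solutions}.

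Next, denote by $S_x$ the set of global minimizers of $J_\pi(\cdot|x)$ with strictly positive definite Hessian. The previous step shows that $S_x$ is a discrete subset of $\R^m$: each of its points has a neighborhood intersecting $S_x$ only at itself. Since $\R^m$ is a separable metric space, any discrete subset is at most countable. Concretely, for each $y^\ast \in S_x$ replace $\delta(y^\ast)$ by $\tfrac{1}{2}\min\{\delta(y^\ast), \mathrm{dist}(y^\ast, S_x \setminus \{y^\ast\})\}$ so that the balls $B_{\delta(y^\ast)}(y^\ast)$ are pairwise disjoint, pick a point of $\Q^m$ in each (possible since each ball is open and nonempty), and obtain an injection $S_x \hookrightarrow \Q^m$. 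Hence $S_x$ is countable.

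There is no real obstacle here; the content of the lemma reduces to the standard fact that strict local minimizers of a $C^2$ function form a discrete, hence countable, set in Euclidean space. The only point to verify carefully is that the $C^2$ regularity holds for \emph{every} $x$ (not just almost every), which is already built into Lemma \ref{lem:marginal-problem-smooth} under the growth condition $(\mathbf{A4})$.
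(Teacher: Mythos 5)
Your proof is correct and takes a genuinely different route from the paper's. Both arguments hinge on the same analytic core — Lemma \ref{lem:marginal-problem-smooth} gives $C^2$ regularity of $J_\pi(\cdot|x)$, and positive-definiteness of the Hessian forces quadratic growth near $y^\ast$, so a non-degenerate global minimizer is isolated among global minimizers — but the conclusions are reached differently. The paper uses $(\mathbf{A1})$ to confine minimizers to a compact set $K_x$, stratifies by a lower bound $\eta$ on the Hessian, argues that the isolation radius can be taken uniform over the compact set $M_\eta$ of minimizers satisfying that lower bound (which makes $M_\eta$ finite), and then lets $\eta \to 0$ to get a countable union of finite sets. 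You bypass both the compactness and the uniformity step entirely: once each point of $S_x$ is isolated in $S_x$, discreteness plus separability of $\R^m$ gives countability outright. Your route is more elementary and actually uses less — it does not need $(\mathbf{A1})$ at all, only the $C^2$ regularity from $(\mathbf{A4})$. The one cosmetic slip is the remark that the quadratic growth estimate rules out other \emph{critical points} of $J_\pi(\cdot|x)$ in $B_{\delta(y^\ast)}(y^\ast)$; that claim is true but follows from the inverse function theorem applied to $D_y J_\pi$, not from the growth bound. Since you only need to exclude other \emph{global minimizers}, which the growth bound does directly, this does not affect the argument.
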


\begin{proof}
  First note that since $c$ is $C^2$ by \textbf{(A4)}, it follows from Lemma \ref{lem:marginal-problem-smooth} that the marginal problem is a $C^2$ function in $y$. Furthermore, by \textbf{(A1)} the minimizers of the marginal point at a point $x$ must live in a compact set $K_x \subset \Y$. Consider the set $M_\eta \subset K_x$ of global minimizers of the marginal problem at $x$ satisfying $|D^2_y J_\pi(y|x)|\geq \eta$. We notice that $M_\eta$ will also be compact. As $J_\pi(y|x)$ is $C^2$, each element of $M_\eta$ can be surrounded by a ball of some radius $r_\eta>0$ which contain no other point in $M_\eta$: this essentially says that a global minimizer with a lower bound on the Hessian is an isolated minimizer with a quantifiable distance of isolation. As $M_\eta$ is compact, we then have that it actually must be finite. By taking $\eta$ to zero, this argument shows that the number of minimizers with non-degenerate Hessian must be at most countable.
\end{proof}

We now choose to decompose the optimal plan into points where the Hessian is non-degenerate (i.e. rank strictly less than $m$) and its complement via\begin{equation}\label{eqn:pi-decomposition}
  \pi = \pi_S + \pi_I,\qquad \pi_S = \pi_{|\mathrm{det}(D_{yy}^2 J_\pi) = 0}, \qquad \pi_I = \pi_{|\mathrm{det}(D_{yy}^2 J_\pi) \neq 0}.
\end{equation}
In terms of this decomposition, we can use Corollary \ref{cor:countable-graph} along with Lemma \ref{lem:countable-mins} to immediately give the following.

\begin{prop}\label{prop:pi-non-degenerate-graph-function}
  Let $\pi$ be a minimizer of \eqref{DR}, for $c$ of type $(\mathbf{N}^2)$ and satisfies assumptions $\mathbf{(A1)}$-$\mathbf{(A4)}$. Using the decomposition \eqref{eqn:pi-decomposition}, then $\pi_I$ is supported on the graph of a function.
\end{prop}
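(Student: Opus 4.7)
The plan is to reduce the proposition to a direct application of Corollary \ref{cor:countable-graph} by exhibiting a countable family of $C^1$ local graphs whose union covers $\spt \pi_I$. Once this is done, the corollary immediately yields that $\pi_I$ is supported on the graph of a single function.

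First I would restrict the support via Theorem \ref{marginal_problem}: its conclusion is that $\spt \pi \subset \Lambda_\pi$, so for $\pi_I$-a.e.\ $(x,y)$ the point $y$ is a global minimizer of $J_\pi(\cdot|x)$ and, by the very definition of $\pi_I$ in \eqref{eqn:pi-decomposition}, the Hessian $D^2_{yy} J_\pi(y|x)$ is non-degenerate. Because $J_\pi$ is $C^2$ by Lemma \ref{lem:marginal-problem-smooth}, a non-degenerate global minimizer is automatically a strict local minimizer with positive definite Hessian, so Lemma \ref{lem:Implicit-solutions} applies at each such point. This produces, for every $(\bar x, \bar y)\in \spt \pi_I$, a radius $\delta = \delta(\bar x,\bar y) > 0$ and a $C^1$ map $\lambda_{(\bar x,\bar y)}: B_\delta(\bar x)\to B_\delta(\bar y)$ that is the unique strict local minimizer of $J_\pi(\cdot|x)$ inside $B_\delta(\bar y)$, whose graph coincides with the set of non-degenerate local minima lying inside $B_\delta(\bar x) \times B_\delta(\bar y)$.

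Next I would use the second-countability of $\R^d \times \R^m$ to pass to a countable family. Let $V$ denote the set of $(x,y)$ where $y$ is a non-degenerate strict local minimizer of $J_\pi(\cdot|x)$; the product balls $\{B_\delta(\bar x) \times B_\delta(\bar y)\}$ obtained above form an open cover of $V$ in $\R^d\times\R^m$, and by the Lindel\"of property there is a countable subcover $\{B_i\times C_i\}_{i=1}^\infty$ on each piece of which $V$ decomposes as the graph of a $C^1$ map $\lambda_i: B_i \to C_i$. Since $\spt \pi_I \subset V$, the graphs of the countable family $\{\lambda_i\}_{i=1}^\infty$ together cover $\spt \pi_I$.

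Finally, I would invoke Corollary \ref{cor:countable-graph} applied to $\{\lambda_i\}_{i=1}^\infty$ with $O_i = B_i$: the restriction of $\pi$ to $\bigcup_i \{(x,\lambda_i(x)) : x \in B_i\}$ is supported on the graph of a function. This restriction coincides with $\pi_I$ by construction, so $\pi_I$ is supported on the graph of a function. The main obstacle is conceptual rather than technical: although the marginal minimizers can form a complicated set as a function of $x$ (as Example \ref{qMDS_MP} shows), restricting to the non-degenerate part forces each minimizer to lie on a $C^1$ sheet via the implicit function theorem, and second-countability of the ambient space keeps the collection of sheets countable, which is exactly the hypothesis needed to invoke Corollary \ref{cor:countable-graph}.
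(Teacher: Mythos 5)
Your proof is correct and follows the paper's intended route: apply Lemma \ref{lem:Implicit-solutions} to realize the non-degenerate part of the support as a union of local $C^1$ sheets, and then conclude via Corollary \ref{cor:countable-graph}. The only cosmetic difference is the mechanism for producing a countable family $\{\lambda_i\}$: the paper points to Lemma \ref{lem:countable-mins} (at most countably many non-degenerate global minimizers for each fixed $x$), whereas you invoke second-countability of $\R^d\times\R^m$ and the Lindel\"of property directly on the cover of product balls supplied by the implicit function theorem. Your covering argument is arguably the cleaner bridge to Corollary \ref{cor:countable-graph}, since that corollary needs a countable atlas of local graphs rather than countability of the pointwise fibers $\lambda(x)$; the paper's reference to Lemma \ref{lem:countable-mins} leaves that assembly step implicit, and your Lindel\"of step supplies it. Both paths are sound and arrive at the same conclusion.
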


The only remaining point is to rule out multivaluedness at points where the Hessian of the marginal problem is degenerate. We address this issue completely in the following proposition.
\begin{prop}
Assume $c$ is a cost of type $(\mathbf{N}^2)$ and satisfies assumptions $(\mathbf{A1}),\,(\mathbf{A2}),$ and $(\mathbf{A5})$. If $\pi$ is an optimal plan, then $\pi_S$ is concentrated on the graph of a function.
  \label{prop:degenerate-region-map}
\end{prop}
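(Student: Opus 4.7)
My plan is to extend Proposition \ref{prop:paired-support} to the degenerate setting, where the implicit function theorem is unavailable. In place of the smooth branches used there, I would construct continuous branches from measurable selections via Lusin's theorem, and exploit $(\mathbf{A5})$ to drive the quadratic variation strictly negative. Suppose for contradiction that $\pi_S$ is not concentrated on the graph of a function. Disintegrate $\pi_S(dx\,dy) = \nu_S(dy|x)\mu_S(dx)$, where $\mu_S$ is the $\X$-marginal of $\pi_S$. Failure of graph support means $\nu_S(\cdot|x)$ is non-Dirac on a $\mu_S$-positive set. First I would extract $\eta > 0$ and a Borel set $A$ with $\mu_S(A) > 0$ on which $\mathrm{diam}(\spt \nu_S(\cdot|x)) \geq \eta$; a measurable selection theorem (e.g., Kuratowski--Ryll-Nardzewski) then produces measurable $y_1, y_2 : A \to \Y$ with $y_i(x) \in \spt \nu_S(\cdot|x)$ and $|y_1(x) - y_2(x)| \geq \eta/2$. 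Lusin's theorem restricts to a compact $K \subset A$ of positive $\mu_S$-measure on which $y_1,y_2$ are continuous; passing further to a small ball $K \cap B_\delta(\bar x)$ around a density point $\bar x$, I may assume $y_i(x)$ is arbitrarily close to some $\bar y_i$, with $|\bar y_1 - \bar y_2| \geq \eta/2$.

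For small $\varepsilon > 0$, define the swap perturbation by letting $\dpi^-$ be the restriction of $\pi$ to $\{(x,y) : x \in K \cap B_\delta(\bar x),\; y \in B_\varepsilon(y_2(x))\}$, taking $\gamma$ to be its $\X$-marginal, and setting $\dpi^+ = \gamma(dx) \otimes \delta_{y_1(x)}(dy)$ and $\dpi = \dpi^+ - \dpi^-$. Then $\pi + \dpi \geq 0$ and the $\X$-marginal of $\pi$ is preserved. The linear term $\J(\dpi|\pi) = \int J_\pi(y|x)\,\dpi(dx\,dy)$ vanishes: by Theorem \ref{marginal_problem}, both $y_1(x)$ and every $y \in \spt \dpi^-$ lie in $\Lambda_\pi$ and thus attain $\min J_\pi(\cdot|x)$.

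The heart of the argument is showing the quadratic term is strictly negative. Sending $\varepsilon \to 0$ concentrates $\dpi^-$ onto the graph of $y_2$, so after symmetrizing in $x,x'$ via $(\mathbf{A2})$, $\J(\dpi)$ is well-approximated by $\iint F(x,x')\,\gamma(dx)\gamma(dx')$ with $F(x,x') = c(x,x',y_1,y_1') + c(x,x',y_2,y_2') - 2c(x,x',y_1,y_2')$, where $y_i := y_i(x)$ and $y_i' := y_i(x')$. A Taylor expansion rewrites $F = (y_1-y_2)^T M(x,x')(y_1'-y_2')$ with $M$ the average of $D^2_{yy'}c$ over the product of line segments $[y_2,y_1] \times [y_2',y_1']$. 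Assumption $(\mathbf{A5})$ bounds this mixed Hessian by $-\psi_1(|y - y'|^2) + \psi_2(|x - x'|^2)$ in the matrix sense; because the line-segment parameters $s,t$ give $|y(s) - y'(t)| \approx |s - t||\bar y_1 - \bar y_2|$ while $|x-x'| < 2\delta$, the $\psi_1$-term integrates to a strictly negative quantity of order $|\bar y_1 - \bar y_2|^2$, whereas shrinking $\delta$ makes the $\psi_2$-error negligible. This forces $\J(\dpi) < 0$, contradicting the optimality of $\pi$.

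The main technical obstacle is quantitatively balancing the $\psi_1$-gain against the $\psi_2$-error and the Taylor remainder, which requires choosing the scales in the order: separation $|\bar y_1-\bar y_2|$ first, then $\delta$, then $\varepsilon$. The strict monotonicity of $\psi_1$ together with the uniform lower bound $|\bar y_1 - \bar y_2| \geq \eta/2$ provides the quantifiable positive gap that closes the estimate.
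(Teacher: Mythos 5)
Your proof takes a genuinely different route from the paper's. The paper constructs a \emph{smooth} one-parameter family $\pi_\varepsilon = (\mathrm{id}_\X, y + \varepsilon\phi(x,y))_\sharp \pi$, where $\phi$ is carefully chosen in the nullspace of $D^2_{yy}J_\pi$ (available precisely because we are on the degenerate set $\pi_S$), so that the troublesome self-interaction term $\phi^T D^2_{yy}J_\pi\, \phi$ in the second variation vanishes and only the cross term $\phi^T D^2_{yy'}c\, \phi'$ survives, which $(\mathbf{A5})$ then controls; an angular restriction $\phi_0 \cdot v \geq \cos(\pi/8)$ keeps the perturbation directions aligned. You instead extend the finite ``swap'' perturbation of Proposition \ref{prop:paired-support}: Lusin's theorem replaces the implicit function theorem to produce continuous branches $y_1, y_2$ with a uniform separation $\eta/2$, the linear term vanishes by Theorem \ref{marginal_problem}, and the quadratic term is the finite-difference quantity $F = c(y_1,y_1') + c(y_2,y_2') - c(y_1,y_2') - c(y_2,y_1')$, which you write as $(y_1-y_2)^T M (y_1'-y_2')$ with $M$ an average of $D^2_{yy'}c$ along segments and control via $(\mathbf{A5})$. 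The key structural difference is that your perturbation is a genuine mass swap, so $\J(\dpi)$ involves only $\dpi \otimes \dpi$ and never integrates against $\pi$ at all --- $D^2_{yy}J_\pi$ simply does not appear, and you never use degeneracy of the Hessian. Thus your argument, if carried out carefully, would apply to all of $\pi$ (not just $\pi_S$), sidestepping the decomposition \eqref{eqn:pi-decomposition} and Propositions \ref{lem:Implicit-solutions}--\ref{prop:pi-non-degenerate-graph-function} entirely; the paper's route, conversely, gives cleaner geometric meaning to the degenerate set and avoids the measurable-selection/Lusin machinery. Two details you should tighten: first, since $\gamma_\varepsilon(\X) \to 0$ as $\varepsilon \to 0$, the ``send $\varepsilon \to 0$'' step should be phrased as an estimate on $\J(\dpi)/\gamma_\varepsilon(\X)^2$, not a limit of $\J(\dpi)$ itself; second, $(\mathbf{A5})$ gives a Loewner \emph{upper} bound $M \preceq (-\Psi_1 + \Psi_2)I$, and to conclude $u^T M v < 0$ for distinct $u = y_1-y_2$ and $v = y_1'-y_2'$ you also need a uniform lower bound on $D^2_{yy'}c$ over the relevant compact set (available from $C^2$ regularity) and then the polarization identity $u^T M v = \tfrac14[(u+v)^T M (u+v) - (u-v)^T M (u-v)]$, with $|u-v| \to 0$ as $\delta \to 0$ ensuring the negative term dominates.
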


\begin{remark}
  In this theorem we notice that there are no requirements on the measure $\mu$, nor on $m,d$. Furthermore, we notice that in the statement we can say that $\pi$ is induced by a map on the set where $\rho_s>0$, and not just $\pi_S$. Hence any part of the support not covered by Proposition \ref{prop:degenerate-region-map} will be covered by Proposition \ref{prop:pi-non-degenerate-graph-function}.
\end{remark}

\begin{proof}
  The main idea of the proof lies in tracking second variations along smooth perturbations of $y$. A portion of these perturbations are chosen to be in directions where the marginal problem is, up to second-order, degenerate, so that the $y,y'$ terms in the second-order Taylor expansion dominate.

  To begin, let $E := \{(x,y) : \mathrm{det}(D_{yy}^2 J_\pi(y|x)) = 0\}$. We then choose a measurable function $\phi_0: E \to \mathbb{S}^{m-1}$ such that
  \[
    D_{yy}^2 J_\pi(y|x)\cdot \phi_0(x,y)=0.
  \]
  The existence of such a function can be justified using measurable selections of the multifunction encoding the nullspace of $D_{yy}^2 J_\pi(y|x)$, see for example \cite{rockafellar2009variational}.

  We will consider a point $x_0$ at which $\rho_s > 0$ in the sense of Lebesgue points. Select a unit vector $v$ so that 
  \[
    \liminf_{\delta \to 0^+} \frac{\pi_S(E \cap B_\delta(x_0)\times \Y \cap \{\phi_0(x,y) \cdot v \geq 1/2\})}{\pi_S(E \cap B_\delta(x_0)\times \Y)} =: \underline{\rho} >0.
\]Such a vector must exist because we can cover the unit sphere with a finite number of cones with opening angle $2\pi/3$, and we have assumed that $\rho_s(x_0) > 0$. For any choice of $\delta>0$, we write
\[
  E_v := E \cap B_\delta(x_0) \times \Y \cap \{\phi_0(x,y) \cdot v \geq \cos(\pi/8)\}
\]
we notice that for $(x_1,y_1),(x_2,y_2) \in E_v$ we have that $\phi(x_1,y_1) \cdot \phi(x_2,y_2) \geq \frac{\sqrt{2}}{2}$.

We note that there exists $K$ so that $|D_{yy}^2 J_\pi(y|x)| < K$ for $x \in B_\delta(x_0)$ and $y \in \arg\min J_\pi(\cdot|x)$. Such a $K$ exists because of the $C^2$ bounds on the marginal problem and the locally uniform compactness of the minimizers of the marginal problem.

Now we define a function $\phi: \X \times \Y \to \Y$ by
\begin{displaymath}
  \phi(x,y) := \begin{cases} \phi_0(x,y) &\text{if } (x,y) \in E_v \\
    \beta v &\text{if } x \in B_\delta(x_0) \text{ and } (x,y) \notin E_v, \\
    0 &\text{otherwise,}
  \end{cases}
\end{displaymath}
where $\beta>0$ is a parameter that we will select later. We now utilize this function $\phi$ to construct a one parameter family of functions
%
$\varphi_x^\varepsilon(y):=y+\varepsilon \phi(x,y)$ and an associated family of plans 
  $\pi_\e$ by writing\[
\pi_\e(dx\:dy)=\varphi_x^\varepsilon\sharp \nu(dy|x)\mu(dx)
\]where $\pi(dx\:dy)=\nu(dy|x)\mu(dx)$ by disintegration. 
We then compute
\[
  \J(\pi_\e) - \J(\pi) = \iint c(x,x',y + \varphi_x^\varepsilon(y),y' + \varphi_{x'}^\varepsilon(y')) - c(x,x',y,y') \pi(dx\:dy) \pi(dx'dy').
\]
Taylor expanding $c$ we then obtain
\begin{align*}
  \J(\pi_\e) - \J(\pi) &= \iint D_y c(x,x',y,y')\e \phi(x,y) + D_{y'} c(x,x',y,y') \e \phi(x',y') + 1/2 \e^2\phi^T(x,y) D_{yy}^2 c(x,x',y,y') \phi(x,y) \\
  &+ 1/2 \e^2\phi^T(x',y') D_{y'y'}^2 c(x,x',y,y') \phi(x',y') + \e^2\phi^T(x,y) D_{yy'}^2 c(x,x',y,y') \phi(x',y') \pi(dx\:dy) \pi(dx'dy')\\
  &+ \O(\e^3).
\end{align*}
By using Fubini's theorem along with Theorem \ref{marginal_problem} and Equation \eqref{gen_CPE}, we immediately have that the order $\e$ terms vanish.

The order $\e^2$ terms take the form, after removing sets where $\phi = 0$, using the fact that $\phi$ is in the nullspace of $D_{yy}^2 c$ on $E_v$, and assuming that $\beta$ is sufficiently small
\begin{align*}
  &\iint_{E_v \times E_v}  \phi^T(x,y) D_{yy'}^2 c(x,x',y,y') \phi(x',y')\pi(dx\:dy) \pi(dx'dy') \\
&+ \iint_{E_v \times B_\delta(x_0)\times \Y \setminus E_v} \beta v^T D_{y y'}^2 c(x,x',y,y') \phi(x',y')\pi(dx\:dy) \pi(dx'dy') \\
&+ \iint_{B_\delta(x_0)\times \Y \setminus E_v \times B_\delta(x_0)\times \Y \setminus E_v} \beta^2 v^T D_{yy}^2 c(x,x',y,y') v \pi(dx\:dy) \pi(dx'dy')\\
&\leq  \frac{\sqrt{2}}{2} \iint_{E_v \times E_v} -\psi_1(|y-y'|^2)+\psi_2(|x-x'|^2) \pi(dx\:dy) \pi(dx'dy') \\
&+ \beta \cos(\pi/8) \iint_{E_v \times B_\delta(x_0)\times \Y \setminus E_v } -\psi_1(|y-y'|^2)+\psi_2(|x-x'|^2) \pi(dx\:dy) \pi(dx'dy') + \beta^2 K \pi(B_\delta(x_0)\times \Y \setminus E_v)^2.
\end{align*}
By taking $\beta$ sufficiently small, we can neglect the last term. Minimality implies that this entire quantity must by $\geq 0$, and hence by taking $\delta \to 0$ and using the fact that the $\psi_1,\psi_2$ are strictly increasing and zero at zero immediately implies that $\nu_{x_0}$ must be given by a Dirac mass. This then implies that on the set where $\rho_s>0$ we have that $\pi$ is supported on the graph of a function.
\end{proof}

Together we have now proven our main theorem, which for simplicity is presented with assumption $\mathbf{(A0)}$ (which implies assumption $\mathbf{(A3)})$ which is the requirement for normed costs) so to simultaneously include normed and inner product costs. 
\begin{theorem}[Deterministic solutions a.k.a. Monge Maps]\label{DP} Let $\mu\in\Pp(\R^d)$ with cost structure either $\mathbf{(IP)}$ or $\mathbf{(N}^2)$ and satisfies assumptions $\mathbf{(A0)}$-$\mathbf{(A5)}$. Then solutions to $\min_{\pi\in\Pi(\mu)} \J(\pi)$ are concentrated on the graph of a function; i.e. there is a measurable $T:\R^d\to\R^m$ such that $\pi(dy|x)=\delta_{T(x)}(dy)$, $\mu(dx)$-almost everywhere. More succinctly, solutions to the dimension reduction problem exist and are necessarily deterministic.
\end{theorem}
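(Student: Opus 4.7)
The plan is to deduce the theorem by assembling the results already established in this section, handling the $\mathbf{(IP)}$ and $\mathbf{(N^2)}$ cases separately. Existence of a minimizer is immediate: Theorem \ref{existence} applies in the $\mathbf{(IP)}$ setting and Theorem \ref{existence_NC} in the $\mathbf{(N^2)}$ setting, since $\mathbf{(A0)}$-$\mathbf{(A5)}$ subsume the required hypotheses (continuity of $c$ is furnished by $\mathbf{(A4)}$, while $\mathbf{(A1)}$ and $\mathbf{(A2)}$ are explicit). Thus the real task reduces to showing that any relaxed minimizer $\pi \in \Pi(\mu)$ must be concentrated on the graph of a measurable function $T:\R^d\to\R^m$.

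For the inner product case this is a direct application of Corollary \ref{corollary:IP_mins}: its hypotheses $\mathbf{(A0)}$-$\mathbf{(A2)}$ and $\mathbf{(A4)}$ are all in force, and the conclusion is exactly that $\pi$ is supported on the graph of a function. In particular, $\mathbf{(A0)}$ plus the linearity of $y \mapsto \langle y, y' \rangle$ renders the marginal problem strictly convex on the subspace generated by $\spt \nu$, forcing a unique marginal minimizer for $\mu$-a.e.\ $x$, at which point Theorem \ref{marginal_problem} concludes.

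For the normed case I would invoke the decomposition \eqref{eqn:pi-decomposition}, $\pi = \pi_I + \pi_S$, splitting the support of $\pi$ according to whether $D^2_{yy} J_\pi(y|x)$ is non-degenerate or degenerate. Proposition \ref{prop:pi-non-degenerate-graph-function} already shows that $\pi_I$ is concentrated on the graph of a function $T_I$, obtained by combining Lemma \ref{lem:countable-mins} (which guarantees at most countably many non-degenerate global minimizers at each $x$) with Corollary \ref{cor:countable-graph} (which glues these countably many branches via the mutual singularity statement of Proposition \ref{prop:paired-support}). Proposition \ref{prop:degenerate-region-map} handles the complementary piece, showing that $\pi_S$ is concentrated on the graph of a function $T_S$. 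Since the underlying sets are determined by the measurable condition $\det D^2_{yy} J_\pi = 0$ versus its complement, the two pieces are supported on disjoint measurable subsets of $\R^d$ (up to a $\mu$-null set), and $T_I, T_S$ glue into a single measurable $T$ whose graph supports $\pi$.

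The main obstacle in this plan is not the final assembly but the control of $\pi_S$: Example \ref{qMDS_MP} already demonstrates that the marginal problem in the normed setting can admit entire continuous families of minimizers, so no direct implicit function argument is available on the degenerate set. This is precisely why $\mathbf{(A5)}$ was introduced, and why Proposition \ref{prop:degenerate-region-map} uses a second-order, needle-type perturbation in the nullspace of $D^2_{yy} J_\pi$: the assumption ensures that the quadratic variation is strictly negative unless $\nu(\cdot|x)$ is a Dirac mass. Once Propositions \ref{prop:pi-non-degenerate-graph-function} and \ref{prop:degenerate-region-map} are in hand, the statement of Theorem \ref{DP} follows by little more than bookkeeping.
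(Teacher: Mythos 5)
Your assembly is essentially the paper's own closing move, and the existence and inner-product halves are fully correct. But the normed-cost gluing step as you state it has a gap. The decomposition $\pi = \pi_I + \pi_S$ in \eqref{eqn:pi-decomposition} partitions $\spt \pi \subset \X \times \Y$ according to whether $\det D^2_{yy} J_\pi(y|x)$ vanishes; it does \emph{not} partition $\X$. For a fixed $x$ the conditional $\nu(\cdot|x)$ may charge both a degenerate and a non-degenerate global minimizer of the marginal problem. Propositions \ref{prop:pi-non-degenerate-graph-function} and \ref{prop:degenerate-region-map} as literally stated then hand you two functions $T_I, T_S$ whose $x$-domains can overlap on a set of positive $\mu$-measure, and nothing in your argument forces $T_I(x) = T_S(x)$ there or prevents $\nu(\cdot|x)$ from splitting its mass between those two points --- which is exactly a non-deterministic $\pi$. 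The claim that ``the two pieces are supported on disjoint measurable subsets of $\R^d$'' does not follow from the determinant condition alone.

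The paper's fix is in the Remark following Proposition \ref{prop:degenerate-region-map}: the proof of that proposition shows something strictly stronger than its statement, namely that on the $x$-set $\{\rho_s > 0\}$ where $\pi_S$ has positive marginal density, the \emph{entire} conditional $\nu(\cdot|x)$ collapses to a Dirac mass, not merely the degenerate part of it. With this strengthening, $\X$ genuinely splits (up to a $\mu$-null set) into $\{\rho_s > 0\}$, where the strong form of Proposition \ref{prop:degenerate-region-map} gives a map directly, and $\{\rho_s = 0\}$, where $\pi = \pi_I$ and Proposition \ref{prop:pi-non-degenerate-graph-function} applies; the domains are now disjoint and the two maps glue measurably. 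So your route is the paper's route, but the final bookkeeping requires invoking this strengthened reading of Proposition \ref{prop:degenerate-region-map} rather than just its stated conclusion.
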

\noindent It is the necessity of a deterministic solution which is surprising in view of Example \ref{exmp:numerical} and compliments the work of \cite{article2}.

\section{Conclusion}
\label{sec:conclusion}
In this work we have examined theoretical properties of some fundamental dimension reduction algorithms. In doing so, we have focused on the optimization problem and necessary conditions associated with population level problems. We have shown that, for natural costs based upon similarities (i.e. inner products), and dissimilarities (i.e. norm differences), that the dimension reduction problem must be minimized by a deterministic mapping, and that any probabilistic behavior is necessarily sub-optimal.

On the other hand, the behavior that we observe in Example \ref{exmp:numerical} raises many difficult questions. Clearly local minimizers found using naive particle descent methods may exhibit probabilistic behavior, which is consistent with the failure of lower-semicontinuity we proved in Proposition \ref{DR_lacks_weak_LSC}. On the level of practical applicability, we find such probabilistic behavior highly problematic. For example, it could lead to very misleading clustering in data visualization, where similar points in feature space are probabilistically assigned to distinct clusters.

These issues raise many natural follow up questions, a few of which we list here:

\begin{itemize}
  \item Are the issues with probabilistic minimizers found via particle descent methods still present in real-world data sets? We have not pursued this issue here because comprehensively addressing this question calls for a detailed study across numerous benchmark data sets.
  \item What computational methods can be developed to avoid spurious probabilistic behavior in dimension reduction, and how can the necessary conditions identified in this work be used to do so?
  \item If non-linear dimension reduction algorithms often induce discontinuous embeddings, how greatly can they modify the topology of the data in feature space?
  \item Is similar behavior relevant in other unsupervised learning methods?
\end{itemize}

We hope that these questions help to stimulate a more detailed study of dimension reduction methods.

\section*{Acknowledgements}

The authors gratefully acknowledge the support of NSF DMS 2307971 and the Simons Foundation MP-TSM. AP also warmly thanks Peter McGrath and Erik Bates for many helpful discussions about early versions of the work.
\printbibliography
\end{document}